\documentclass[twoside]{article}

\usepackage[preprint]{aistats2027}
\usepackage[round]{natbib}

\usepackage[utf8]{inputenc} 
\usepackage[T1]{fontenc}    
\usepackage{hyperref}       
\usepackage{url}            
\usepackage{booktabs}       
\usepackage{amsfonts}       
\usepackage{nicefrac}       
\usepackage{microtype}      
\usepackage{xcolor}         

\usepackage{natbib}
\usepackage{caption}
\usepackage{graphicx}
\usepackage{amsmath}
\usepackage{amsthm}
\usepackage{subcaption}
\usepackage{amssymb}
\usepackage{mathtools}
\usepackage{bm}
\usepackage{xspace}
\usepackage[inline]{enumitem}
\usepackage{threeparttable}
\usepackage{tikz}
\usepackage{multirow}
\usepackage{multicol}
\usepackage{comment}
\usepackage{tabularx}
\usepackage{float}

\usepackage{graphicx,bm,colonequals,amsmath,amssymb,url,xcolor,bbm}
\usepackage{array,tabularx,multirow}
\usepackage{soul} 
\usepackage{placeins} 
\usepackage[normalem]{ulem}

\graphicspath{{plots/}}

\newcommand{\todo}[1]{\textcolor{red}{[#1]}}

\usepackage{mathtools}
\mathtoolsset{showonlyrefs} 

\usepackage{amsthm}
\newtheorem{theorem}{Theorem}
\newtheorem{proposition}{Proposition}

\newtheorem{corollary}{Corollary}

\usepackage[ruled,vlined]{algorithm2e}
\SetKwInput{KwInput}{Input}
\SetKwInput{KwOutput}{Output}
\usepackage{algorithmic}
\usepackage{framed}
\usepackage{float}

\newcommand{\bh}{\mathbf{h}}
\newcommand{\bu}{\mathbf{u}}

\newcommand{\bv}{\mathbf{v}}

\newcommand{\bt}{\mathbf{t}}
\newcommand{\bs}{\mathbf{s}}

\newcommand{\bx}{\mathbf{x}}
\newcommand{\by}{\mathbf{y}}
\newcommand{\bq}{\mathbf{q}}
\newcommand{\bg}{\mathbf{g}}
\newcommand{\bk}{\mathbf{k}}

\newcommand{\bw}{\mathbf{w}}
\newcommand{\bo}{\mathbf{o}}
\newcommand{\be}{\mathbf{e}}

\newcommand{\bz}{\mathbf{z}}

\newcommand{\bP}{\mathbf{P}}
\newcommand{\bA}{\mathbf{A}}

\newcommand{\bL}{\mathbf{L}}

\newcommand{\bI}{\mathbf{I}}
\newcommand{\bD}{\mathbf{D}}
\newcommand{\bH}{\mathbf{H}}

\newcommand{\bK}{\mathbf{K}}

\newcommand{\bC}{\mathbf{C}}
\newcommand{\bM}{\mathbf{M}}

\newcommand{\bfzero}{\mathbf{0}}
\newcommand{\bfalpha}{\bm{\alpha}}
\newcommand{\bfgamma}{\bm{\gamma}}
\newcommand{\bfmu}{\bm{\mu}}
\newcommand{\bfxi}{\bm{\xi}}
\newcommand{\bftheta}{\bm{\theta}}

\newcommand{\bfbeta}{\bm{\beta}}
\newcommand{\bfepsilon}{\bm{\epsilon}}

\newcommand{\bfomega}{\bm{\omega}}

\newcommand{\bfphi}{\bm{\phi}}

\newcommand{\bfSigma}{\bm{\Sigma}}

\newcommand{\bfLambda}{\bm{\Lambda}}

\newcommand{\var}{\textbf{Var}}
\newcommand{\cov}{\textbf{Cov}}

\newcommand{\diag}{diag}

\newcommand{\GP}{\mathcal{GP}}

\newcommand{\order}{\mathcal{O}}
\newcommand{\normal}{\mathcal{N}}

\DeclareMathOperator*{\bbE}{\mathbb{E}}
\newcommand{\bbR}{\mathbb{R}}

\newcommand{\domain}{\mathcal{D}}

\newcommand{\lite}[0]{\textsc{LITE}\xspace}

\begin{document}

%

%

\twocolumn[

\aistatstitle{Derivative Gaussian Processes on a Two-Direction Budget }

\aistatsauthor{ Hyunseok Seung \And Matthias Katzfuss }

\aistatsaddress{ University of Wisconsin--Madison \And  University of Wisconsin--Madison } ]

\begin{abstract}
Gradient observations promise more accurate Gaussian process (GP) surrogates, but the cost of incorporating them has long stood in the way of realizing that promise.
We propose a derivative GP with a budget of just two directions per observed gradient.
One direction focuses on each gradient's direct contribution to target prediction, while the other aggregates its indirect contributions through correlations with the conditioning function values.
Within a Vecchia approximation, where each prediction conditions on $m$ nearby inputs in $d$ dimensions, this construction represents their $md$ gradient coordinates using at most $2m$ directional derivatives, giving $\mathcal{O}(m^3)$ dense factorization cost per prediction target.
For general conditioning sets, we bound the posterior approximation error relative to using full gradients and characterize when the error is small or the approximation is exact.
In simulations, our method matches the accuracy of a leading exact gradient-reduction method at equal conditioning set size.
Because its cost grows much more slowly with that size, it can use conditioning sets well beyond the memory limit of the exact method, reaching lower prediction error with a small fraction of the time and memory. 
Notably, our method can exploit gradient observations while requiring less computation time or memory than function-only GP baselines.
Code is available at \href{https://github.com/hseung88/lite}{https://github.com/hseung88/lite}.

\end{abstract}

\section{Introduction}
\label{sec:introduction}
Gradient observations can improve the predictive accuracy of Gaussian process (GP) surrogates~\citep{Solak2002DerivativeOI,Rasmussen2005GaussianPF}, but incorporating full gradients substantially increases the computational and memory costs of inference. Existing scalable derivative GP methods address this problem through kernel interpolation~\citep{Eriksson2018ScalingGP,huang2026scaling}, inducing derivative variables~\citep{Padidar2021ScalingGP}, random feature approximations~\citep{Hung2021ARF}, and structured covariance decompositions~\citep{Roos2021HighDimensionalGP}. Within a Vecchia approximation~\citep{Vecchia1988EstimationAM,Stein2004,Datta2016,Katzfuss2021AGF}, TERA~\citep{Seung2026ScalableDG} recently showed that the $md$ gradient coordinates of $m$ conditioning inputs in $d$ dimensions can be replaced by $m^2$ directional derivatives selected for each prediction target, without changing the target's conditional distribution under a stationary kernel and its gradient noise model. This removes $d$ from the dense linear algebra, but the reduced covariance matrix still requires $\order(m^6)$ factorization cost and $\order(m^4)$ memory per target. Since the accuracy of a Vecchia approximation improves with the size of its conditioning sets~\citep{Katzfuss2021AGF,Guinness2018PermutationAG}, these costs cap the attainable accuracy, and the memory requirement limits how many targets can be processed simultaneously on an accelerator.

A fixed budget of directions per observed gradient would 
alleviate both limitations, provided the selected directions carry the information needed for accurate prediction. In this paper, we propose \lite for \underline{l}ightweight derivative GP \underline{i}nference with \underline{t}wo directional d\underline{e}rivatives per observed gradient. The covariance between each gradient and the target identifies a direction toward the target input; conditioning on the function values introduces a correction that aggregates the directions toward the remaining conditioning inputs. These two components of the conditional covariance determine our two directions, which therefore preserve each gradient's conditional covariance with the target. This gives at most $2m$ directional derivatives per Vecchia conditional, $\order(m^3)$ factorization cost, and $\order(m^2)$ covariance storage. For example, 
with $m=150$, TERA's $22{,}650\times22{,}650$ matrix requires about 4 GB per target, whereas \lite's $450\times450$ matrix requires under 2 MB in double precision, so thousands of targets can be factorized in one batch on a single GPU. We construct the reduced covariance matrix directly from the selected directions.

The two directions preserve each gradient's conditional covariance with the target, but omitted derivatives can still inform the target through their correlations with the retained ones, so the target posterior is not automatically preserved. We show that the reduction is exact when the conditioning geometry is compound-symmetric, the limiting case of distance concentration in high dimensions, and we express the expected KL divergence from the posterior using full gradients in the same conditioning set exactly through interpretable variance ratios, with a two-sided bound and a necessary and sufficient condition for exact recovery.

We evaluate predictive accuracy in controlled GP simulations and on 
a molecular dynamics dataset. We also evaluate optimization performance in Bayesian optimization (BO) tasks, including a real-world hyperparameter optimization benchmark. We compare computation time and memory usage with those of derivative and function-only GP baselines. 
Across regression benchmarks, \lite achieves the lowest runtime
and memory usage among all tested baselines. In high-dimensional
BO tasks with up to $800$ dimensions, it outperforms leading function-only methods while requiring less runtime or memory.

Our contributions are summarized below.
\begin{itemize}[leftmargin=*,itemsep=1pt,topsep=2pt]
\item We derive two directions per observed gradient from its conditional covariance with the prediction target, giving a Vecchia derivative GP with $\order(m^3)$ factorization cost and $\order(m^2)$ covariance memory per target, which enables batched factorization across thousands of targets.
\item We prove that the two directions are exact under compound symmetry and, for general conditioning sets, express the expected KL divergence from the posterior using full gradients exactly, with a two-sided bound and a necessary and sufficient condition for exact recovery.
\item 
In simulations, \lite matches TERA's accuracy at equal $m$ and reaches lower error with conditioning sets beyond TERA's GPU memory limit, using a small fraction of the time and memory.
In GP regression and BO, it also outperforms function-only GP baselines with runtime or memory savings.
\end{itemize}

\section{Related Work}
\label{sec:related}
DDSVGP~\citep{Padidar2021ScalingGP} learns directions at inducing points to construct a sparse variational GP, with the number of directions chosen independently of the input dimension. TERA~\citep{Seung2026ScalableDG} selects directional derivatives of observed gradients for each prediction target, preserving its conditional distribution under the assumed kernel and gradient noise model. In contrast to DDSVGP, our two directions per observed gradient are available in closed form for each target and introduce no additional parameters to optimize. 
Relative to standard derivative GP inference, approximation error arises from restricting conditioning sets through Vecchia and from replacing full gradients with the selected directional derivatives within each set. Relative to TERA, only the second source remains, and Theorem~\ref{thm:methods_approximation_error} characterizes it.

Another approach reduces computational cost by exploiting
structure in derivative covariance matrices or approximating them.
D-SKI~\citep{Eriksson2018ScalingGP} differentiates the approximate
kernel from structured kernel interpolation, enabling fast
matrix-vector products for iterative solvers.
DSoftKI~\citep{huang2026scaling} approximates the kernel through
softmax interpolation and differentiates the interpolation weights
to obtain covariance blocks for full gradient observations.
\citet{Roos2021HighDimensionalGP} exploit algebraic structure in
derivative covariance matrices for exact inference with cost
linear in the input dimension, particularly when the number
of observation locations is much smaller than the input dimension.

\section{Preliminaries}
\label{sec:prelim}
\subsection{Derivative Gaussian Processes}
\label{sec:prelim_dgp}

Let $f:\domain\to\bbR$, where $\domain\subset\bbR^d$, have prior
$f(\cdot)\sim\GP(\mu_{\bftheta}(\cdot),k_{\bftheta}(\cdot,\cdot))$.
We assume a differentiable $\mu_{\bftheta}$ and a twice continuously differentiable $k_{\bftheta}$.
At each training input $\bx_j$, the observations consist of a function
value and its gradient with noise $y_j=f(\bx_j)+\epsilon_j^y, \
\bg_j=\nabla_{\bx}f(\bx_j)+\bfepsilon_j^g$, where $\epsilon_j^y\sim\normal(0,\sigma_y^2)$ and
$\bfepsilon_j^g\sim\normal(\bfzero,\sigma_g^2\bfLambda_{\bftheta})$,
independently across observations and independently of $f$.
Here $\bfLambda_{\bftheta}$ is the inverse squared lengthscale matrix
in \eqref{eq:prelim_stationary_kernel} below, so the gradient noise is
isotropic in the scaled coordinates of Section~\ref{sec:methods_directions};
exact gradients correspond to $\sigma_g^2=0$.
We consider stationary kernels parameterized as
\begin{equation}
k_{\bftheta}(\bx,\bx')
=\kappa_{\bftheta}\left((\bx-\bx')^\top
\bfLambda_{\bftheta}(\bx-\bx')\right),
\label{eq:prelim_stationary_kernel}
\end{equation}
where $\kappa_{\bftheta}$ defines the covariance function and $\bfLambda_{\bftheta}=\diag(\ell_1^{-2},\ldots,\ell_d^{-2})$ is a diagonal matrix representing inverse squared lengthscales, allowing isotropic or ARD kernels~\citep{Rasmussen2005GaussianPF}.

For $\by=(y_1,\ldots,y_n)^\top$ and
$\bg=(\bg_1^\top,\ldots,\bg_n^\top)^\top$, the joint distribution is
\begin{equation}
\begin{bmatrix}\by\\\bg\end{bmatrix}
\sim\normal\left(
\begin{bmatrix}\bfmu_y\\\bfmu_g\end{bmatrix},
\begin{bmatrix}
\bK_{yy}&\bK_{yg}\\
\bK_{gy}&\bK_{gg}
\end{bmatrix}\right),
\label{eq:prelim_joint_dgp}
\end{equation}
where $[\bK_{yy}]_{jj'}=k_{\bftheta}(\bx_j,\bx_{j'})
+\mathbbm{1}\{j=j'\}\sigma_y^2$.
Let $\boldsymbol{\delta}_{jj'}=\bx_j-\bx_{j'}$ and
$r_{jj'}=\boldsymbol{\delta}_{jj'}^\top
\bfLambda_{\bftheta}\boldsymbol{\delta}_{jj'}$.
Differentiation gives the cross-covariance
\begin{equation}
\begin{aligned}
\cov(\bg_j,y_{j'})
&=\nabla_{\bx_j}k_{\bftheta}(\bx_j,\bx_{j'})\\
&=2\kappa_{\bftheta}'(r_{jj'})
\bfLambda_{\bftheta}\boldsymbol{\delta}_{jj'},
\end{aligned}
\label{eq:prelim_derivative_covariances}
\end{equation}
and the gradient block $\cov(\bg_j,\bg_{j'})$ is given in
\eqref{eq:app_gg_covariance} in Appendix~\ref{app:reduced_covariances}.

\subsection{Conditioning Sets and Prediction}
\label{sec:prelim_reference}

For a target observation $y_i$ at $\bx_i$, let
$c=\{j_1,\ldots,j_m\}$ index the conditioning observations,
with $i\notin c$.
Denote the corresponding function values and gradients by
$\by_c=(y_{j_1},\ldots,y_{j_m})^\top$ and
$\bg_c=(\bg_{j_1}^\top,\ldots,\bg_{j_m}^\top)^\top$.
We predict $y_i$ given the observations
$\bz_c=(\by_c^\top,\bg_c^\top)^\top$.
The posterior predictive distribution of $y_i$ given $\bz_c$
is Gaussian with mean and variance
\begin{equation}
\begin{aligned}
\mu_{y_i\mid\bz_c}
&=\mu_{y_i}
+\bk_{y_i\bz_c}\bK_{\bz_c\bz_c}^{-1}
(\bz_c-\bfmu_{\bz_c}),\\
\sigma^2_{y_i\mid\bz_c}
&=\sigma^2_{y_i}
-\bk_{y_i\bz_c}\bK_{\bz_c\bz_c}^{-1}\bk_{\bz_c y_i},
\end{aligned}
\label{eq:prelim_posterior_updates}
\end{equation}
where $\bK_{\bz_c\bz_c}$ is assumed positive definite.

\paragraph{Vecchia approximation.}
Restricting the size of conditioning sets allows efficient approximation of the joint distribution.
For function observations alone, the Vecchia approximation takes the form
\begin{equation}
p_{\bftheta}(\by)
=\prod_{i=1}^n p_{\bftheta}(y_i\mid\by_{<i})
\approx\prod_{i=1}^n p_{\bftheta}(y_i\mid\by_{c(i)}),
\label{eq:prelim_vecchia}
\end{equation}
where $\by_{<i}=(y_1,\ldots,y_{i-1})^\top$ and
$c(i)\subset\{1,\ldots,i-1\}$ contains at most $m$ preceding indices
under a maximin ordering.

\subsection{Exact Gradient Reduction}
\label{sec:prelim_tera}

For a target input $\bx_i$ and conditioning set $c$,
TERA defines
\begin{equation}
\begin{aligned}
\bD_i&=[\bx_{j_1}-\bx_i,\ldots,\bx_{j_m}-\bx_i],\\
\bq_c&=(\bI_m\otimes\bD_i^\top)\bg_c\in\bbR^{m^2}.
\end{aligned}
\label{eq:prelim_tera_statistic}
\end{equation}
This replaces an $md$ gradient with at most $m^2$
directional derivatives along the columns of $\bD_i$.
For stationary kernels of the form \eqref{eq:prelim_stationary_kernel},
the following equality holds under this gradient noise model,
including exact gradients when $\sigma_g^2=0$:
\begin{equation}
p_{\bftheta}(y_i\mid\by_c,\bg_c)
=p_{\bftheta}(y_i\mid\by_c,\bq_c).
\label{eq:prelim_tera_exactness}
\end{equation}

For ordered conditioning sets $c(i)\subset\{1,\ldots,i-1\}$,
TERA uses these exact reductions within a Vecchia approximation:
\begin{equation}
\tilde p_{\bftheta}(\by\mid\bg)
=\prod_{i=1}^n
p_{\bftheta}(y_i\mid\by_{c(i)},\bq_{c(i)}).
\label{eq:prelim_tera_likelihood}
\end{equation}
For $|c(i)|\le m$, dense factorization of the reduced covariance
matrix costs $\order(m^6)$ per Vecchia factor.

\section{Methods}
\label{sec:method}
We 
approximate each conditional factor in
\eqref{eq:prelim_tera_likelihood} by keeping at most
two directions per observed gradient.

\subsection{Selecting Two Directions per Gradient}
\label{sec:methods_directions}

We define scaled inputs and corresponding gradients, $\tilde{\bx}_j=\bfLambda_{\bftheta}^{1/2}\bx_j,\
\tilde{\bg}_j=\bfLambda_{\bftheta}^{-1/2}\bg_j
$,
where the gradient transformation follows from the chain rule.
We omit the tildes on the scaled inputs and gradients
throughout Section~\ref{sec:method} for notational simplicity.

Fix $\bftheta$, a target $y_i$, and a conditioning set
$c=\{j_1,\ldots,j_m\}$.
To understand how the two directions contribute to prediction, consider the posterior updates:
\begin{equation}
\begin{aligned}
\mu_{y_i\mid\by_c,\bg_c}
&=\mu_{y_i\mid\by_c}
+\bk_{y_i\bg_c\mid\by_c}
\bK_{\bg_c\bg_c\mid\by_c}^{-1}
(\bg_c-\bfmu_{\bg_c\mid\by_c}),\\
\sigma^2_{y_i\mid\by_c,\bg_c}
&=\sigma^2_{y_i\mid\by_c}
-\bk_{y_i\bg_c\mid\by_c}
\bK_{\bg_c\bg_c\mid\by_c}^{-1}
\bk_{\bg_c y_i\mid\by_c}.
\end{aligned}
\label{eq:methods_gradient_updates}
\end{equation}
We examine
$\bk_{\bg_j y_i\mid\by_c} = \cov(\bg_j,y_i\mid\by_c)$ for each $j\in c$,
which describes the dependence between the target and
each gradient after conditioning on the function values.
Define
\begin{equation}
\begin{aligned}
\bw_i
&=\var(\by_c)^{-1}\cov(\by_c,y_i),\\
\eta_{jj'}
&=-2\kappa_{\bftheta}'\!\left(
\|\bx_j-\bx_{j'}\|^2
\right).
\end{aligned}
\label{eq:methods_weights}
\end{equation}
In the scaled coordinates,
\eqref{eq:prelim_derivative_covariances} gives
$\cov(\bg_j,y_{j'})
=\eta_{jj'}(\bx_{j'}-\bx_j)$.
Gaussian conditioning then decomposes the cross-covariance
into two components:
\begin{equation}
\begin{aligned}
\cov(\bg_j,y_i\mid\by_c)
&=\cov(\bg_j,y_i)-\cov(\bg_j,\by_c)\bw_i\\
&=\underbrace{\eta_{ji}(\bx_i-\bx_j)}_{\bu_{ij}}\\
&\quad-\underbrace{\sum_{a=1}^{m}
[\bw_i]_a\,\eta_{j j_a}
(\bx_{j_a}-\bx_j)}_{\bv_{ij}}.
\end{aligned}
\label{eq:methods_conditional_covariance}
\end{equation}
The first component is target-directed, while the second
is a weighted aggregate of neighbor-directed directions.
We retain the corresponding two directional derivatives
at each conditioning input:
\begin{equation}
\boxed{
\bs_{ij}
=
\begin{bmatrix}
\bu_{ij}^{\top}\bg_j\\
\bv_{ij}^{\top}\bg_j
\end{bmatrix}
\in\bbR^2.
}
\label{eq:methods_two_derivatives}
\end{equation}
Stacking these pairs gives the reduced statistic
\begin{equation}
\bs_c
=
\bigl(\bs_{i j_1}^{\top},\ldots,
\bs_{i j_m}^{\top}\bigr)^{\top}
\in\bbR^{2m}.
\label{eq:methods_retained_derivatives}
\end{equation}
Consequently, we approximate each Vecchia conditional factor by
\[
p_{\bftheta}(y_i\mid\by_c,\bg_c)
\approx
p_{\bftheta}(y_i\mid\by_c,\bs_c).
\]
For numerical stability, we normalize each nonzero direction
to unit Euclidean norm.
Figure~\ref{fig:methods_two_directions} provides empirical support for our direction choice, which yields the lowest KL divergence among the directions compared.

\begin{figure}[t]
\centering
\includegraphics[width=0.96\linewidth]{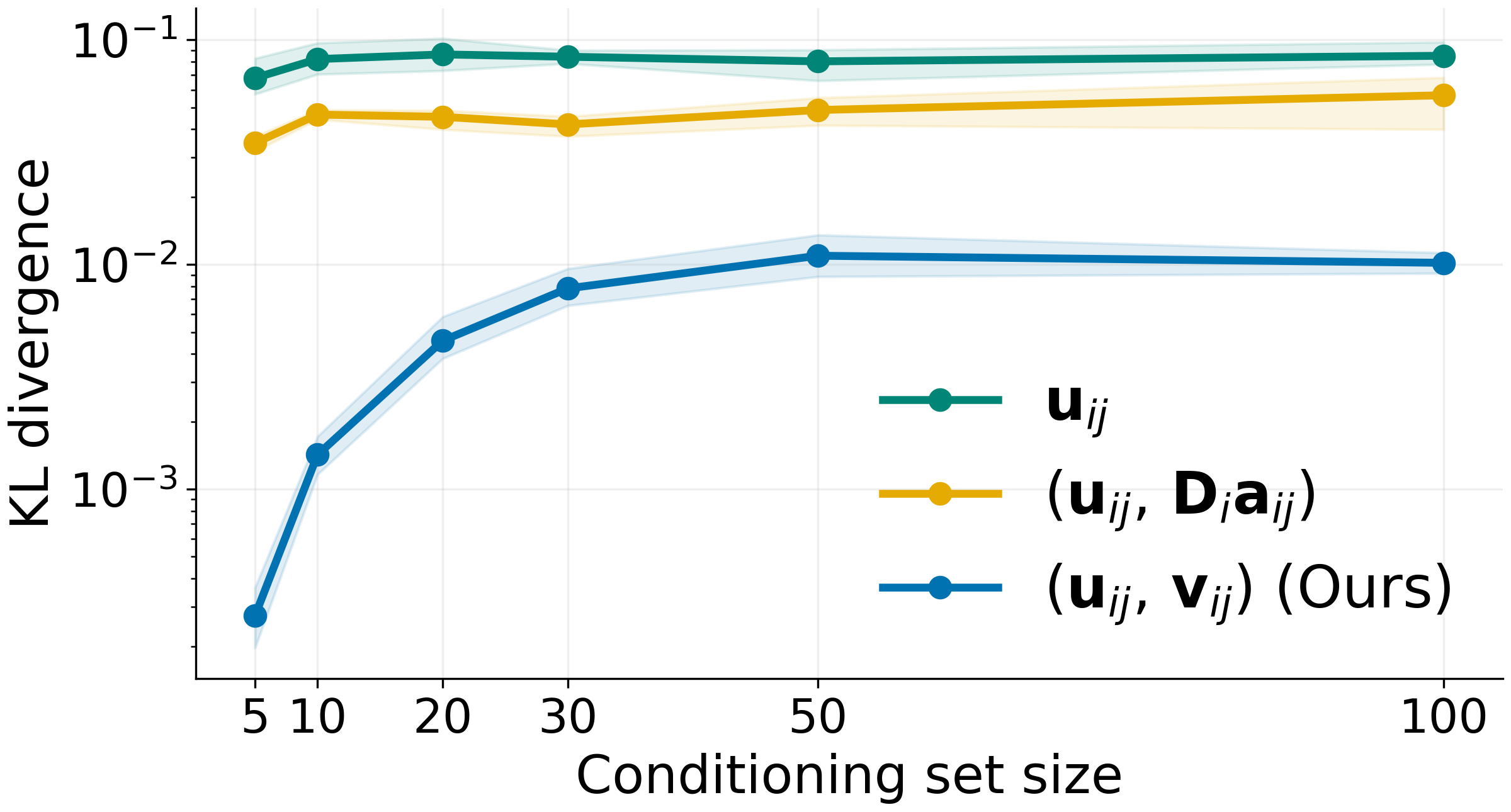}
\caption{Posterior fidelity under different directional-derivative choices as $m$ increases for a Mat\'ern-$5/2$ GP in $d=50$.
We compare \lite with methods using $\bu_{ij}$ 
or $\bu_{ij}$ paired with a random
direction in $\operatorname{col}(\bD_i)$.
KL divergence between the TERA and approximate posteriors is averaged over prediction inputs,
with all methods using the same conditioning inputs.}
\label{fig:methods_two_directions}
\end{figure}

\paragraph{Relation to TERA.} TERA's directional derivatives $\bD_i^\top\bg_j$ can be
equivalently expressed as target-directed and neighbor-directed
derivatives.
For each $j_a\in c$,
\[
(\bx_{j_a}-\bx_i)^\top\bg_j
=\underbrace{(\bx_{j_a}-\bx_j)^\top\bg_j}
_{\text{neighbor-directed}}
-\underbrace{(\bx_i-\bx_j)^\top\bg_j}
_{\text{target-directed}}.
\]
The entry with $j_a=j$ recovers the target-directed
derivative.
Our construction retains the target-directed component
and replaces the neighbor-directed derivatives with the
weighted aggregate identified by~\eqref{eq:methods_conditional_covariance}.
As in TERA, the gradient noise covariance is $\sigma_g^2\bI_d$ in the scaled coordinates.

\subsection{Exactness and Approximation Error}
\label{sec:methods_error}
We 
compare conditioning on $\bs_c$ with conditioning on $\bg_c$, holding the
conditioning set $c$ fixed.
We first establish exactness under a symmetry condition
and then analyze the error for general conditioning sets.

\paragraph{Exactness under compound symmetry.}
In high dimensions, pairwise distances among a fixed
number of independently sampled inputs become
approximately equal after rescaling under suitable distributional
assumptions~\citep{Beyer1999WhenI,Hall2005GeometricRO,Ahn2007TheHL}.
We consider the limiting case in which the conditioning
inputs are equally distant from the target and from
each other in scaled coordinates, so that the Gram matrix
$\bD_i^\top\bD_i$ has compound symmetry.
\begin{proposition}[Exactness under compound symmetry]
\label{prop:methods_symmetric_exactness}
Let $m\ge2$ and let $\bD_i$ contain the target-centered
differences in scaled coordinates.
Suppose that, for some $a,b\in\bbR$,
\[
\bD_i^\top\bD_i
=a\bI_m+b\mathbf1\mathbf1^\top.
\]
Under the assumed gradient noise model, assume that the
joint covariance of $(y_i,\by_c,\bg_c)$
is positive definite and that $\bu_{ij}$ and $\bv_{ij}$
are linearly independent for each $j\in c$.
Then
\[
p_{\bftheta}(y_i\mid\by_c,\bg_c)
=
p_{\bftheta}(y_i\mid\by_c,\bs_c).
\]
\end{proposition}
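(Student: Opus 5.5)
Since conditioning on $\by_c$ is common to both sides, it suffices to work with the conditional quantities in \eqref{eq:methods_gradient_updates}: the posterior of $y_i$ given $(\by_c,\bg_c)$ equals that given $(\by_c,\bs_c)$ iff the linear predictor $\bk_{y_i\bg_c\mid\by_c}\bK_{\bg_c\bg_c\mid\by_c}^{-1}\bg_c$ (the conditional best linear predictor of $y_i$ from $\bg_c$ after removing $\by_c$) is a linear function of $\bs_c$. Equivalently, writing $\bs_c=\bP^\top\bg_c$ with $\bP=\operatorname{blkdiag}([\bu_{ij},\bv_{ij}])_{j\in c}$ (full column rank $2m$ by the linear independence assumption), I need the vector $\bh=\bK_{\bg_c\bg_c\mid\by_c}^{-1}\bk_{\bg_c y_i\mid\by_c}$ to lie in $\operatorname{col}(\bP)$, i.e. each block $\bh_j\in\operatorname{span}\{\bu_{ij},\bv_{ij}\}$. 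Then $\bh^\top\bg_c=\bfalpha^\top\bs_c$ for some $\bfalpha$, and conditioning on $\bs_c$ recovers both mean and variance (the variance reduction $\bk^\top\bh$ is achieved since $\bs_c$ contains the optimal linear combination and no linear function of $\bg_c$ does better).

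Step 1 is to reduce to TERA: by \eqref{eq:prelim_tera_exactness}, the optimal predictor already depends on $\bg_c$ only through $\bq_c$, so $\bh_j\in\operatorname{col}(\bD_i)$ for each $j$. Hence it suffices to show $\bh_j\in\operatorname{span}\{\bu_{ij},\bv_{ij}\}$, where by \eqref{eq:methods_conditional_covariance} both $\bu_{ij}$ and $\bv_{ij}$ lie in $\operatorname{col}(\bD_i)$ (note $\bx_{j_a}-\bx_j=\bD_i(\be_a-\be_j)$ and $\bx_i-\bx_j=-\bD_i\be_j$). Step 2 is to exploit compound symmetry: all pairwise squared distances among $\{\bx_i\}\cup\{\bx_j\}_{j\in c}$ are determined by the Gram matrix, giving $\|\bx_j-\bx_i\|^2=a+b$ and $\|\bx_j-\bx_{j'}\|^2=2a$. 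So all target–neighbor kernel quantities are equal and all neighbor–neighbor quantities are equal; consequently $\var(\by_c)$ is compound symmetric, $\bw_i=w\mathbf1$ for a scalar $w$, and $\bv_{ij}=w\eta\,\bD_i(\mathbf1-m\be_j)$, $\bu_{ij}=-\eta'\bD_i\be_j$. Thus $\operatorname{span}\{\bu_{ij},\bv_{ij}\}=\bD_i\operatorname{span}\{\be_j,\mathbf1\}$.

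Step 3, the main obstacle, is showing $\bh_j\in\bD_i\operatorname{span}\{\be_j,\mathbf1\}$. I would use an invariance argument: the whole configuration, writing everything in coefficients relative to $\bD_i$ (which is injective when $a>0$; handle degenerate cases separately or via positive definiteness), is invariant under permutations of $c$, because the Gram matrix, the kernel entries (functions of $\bD_i^\top\bD_i$ and noise $\sigma_g^2\bI_d$) and the target are permutation-symmetric. Writing $\bh_j=\bD_i\bfbeta_j$ (restricted to $\operatorname{col}(\bD_i)$ via the TERA reduction, where the reduced problem's covariances are polynomial in $\bD_i^\top\bD_i$ and kernel scalars), uniqueness of the solution of the reduced normal equations implies equivariance: $\bfbeta_{\pi(j)}=\bP_\pi\bfbeta_j$ for every permutation $\pi$. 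Taking $\pi$ fixing $j$ shows $\bfbeta_j$ is invariant under all permutations of the other $m-1$ coordinates, so $\bfbeta_j\in\operatorname{span}\{\be_j,\mathbf1\}$. The care needed is in verifying that the reduced covariance in the $\bq_c$ coordinates is genuinely permutation-equivariant (using the explicit form \eqref{eq:app_gg_covariance}, whose entries depend only on inner products $(\bx_j-\bx_i)^\top(\bx_{j'}-\bx_i)$-type quantities) and that the unique minimizer is preserved; any non-uniqueness from a singular $\bD_i$ is resolved by choosing the minimum-norm solution, which is also equivariant.
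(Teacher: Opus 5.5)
Your proposal is correct, but it reaches the key fact by a different route from the paper. The key fact is that every block of $\bK_{\bg_c\bg_c\mid\by_c}^{-1}\bk_{\bg_c y_i\mid\by_c}$ lies in $\operatorname{span}\{\bu_{ij},\bv_{ij}\}$.

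\textbf{The paper's route.} The paper never uses TERA. It recenters at $\bar{\bx}_c$ and shows two things about the centered differences: their Gram matrix is $a(\bI_m-\mathbf1\mathbf1^\top/m)$, and they are orthogonal to $\bar{\bx}_c-\bx_i$. It then defines $\mathcal S$, the space of stacked vectors whose blocks are $\alpha(\bx_j-\bar{\bx}_c)+\beta(\bar{\bx}_c-\bx_i)$ with $\alpha,\beta$ common to all $j$. Using explicit kernel-derivative computations on the full $d$-dimensional gradient blocks, it verifies three facts:
\begin{itemize}
\item $\bk_{\bg_c y_i\mid\by_c}\in\mathcal S$;
\item $\bK_{\bg_c\bg_c}$ maps $\mathcal S$ into itself;
\item the adjustment $\bK_{\bg_c\by_c}\bK_{\by_c\by_c}^{-1}\bK_{\by_c\bg_c}$ maps $\mathcal S$ into itself.
\end{itemize}

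\textbf{Your route.} You invoke TERA's exactness to pass to $\bq_c$, where every covariance block is a function of $\bH_i=\bD_i^\top\bD_i$ and of pairwise distances. Under compound symmetry, $\bH_i$ commutes with every permutation matrix and those distances are permutation invariant. Hence the reduced conditional problem is invariant under relabeling $c$ while permuting coordinates within each block. Uniqueness of the solution, or your minimum-norm choice, then forces $\bfbeta_{\pi(j)}=\bP_\pi\bfbeta_j$. The stabilizer of $j$ then gives $\bfbeta_j\in\operatorname{span}\{\be_j,\mathbf1\}$.

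\textbf{How the two relate.} Since $\bD_i\operatorname{span}\{\be_j,\mathbf1\}=\operatorname{span}\{\bx_j-\bar{\bx}_c,\bar{\bx}_c-\bx_i\}$, and equivariance makes the coefficients common across $j$, your fixed-point space is exactly the paper's $\mathcal S$.
\begin{itemize}
\item Your argument explains why $\mathcal S$ is the right invariant subspace. It replaces the paper's block-by-block computations with a short equivariance check.
\item The paper's proof is self-contained: it does not depend on TERA's reduction. It also gives explicit formulas for how the conditional covariance acts on $\mathcal S$, and it shows separately that the linear-independence assumption forces $\bH_i\succ0$.
\end{itemize}

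\textbf{Minor correction.} $\bD_i$ is injective if and only if both $a>0$ and $a+mb>0$, not just $a>0$. This does not matter for your argument. Your minimum-norm device already covers a singular reduced system, because $\bh_j=\bD_i\bfbeta_j$ is the same for every solution of the reduced normal equations.
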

Under the stated assumptions, each gradient block of
$\bK_{\bg_c\bg_c\mid\by_c}^{-1}
\bk_{\bg_c y_i\mid\by_c}$ lies in $\operatorname{span}\{\bu_{ij},\bv_{ij}\}$.
Consequently, $\bs_c$ contains all the gradient information
needed to recover the posterior mean and variance in
\eqref{eq:methods_gradient_updates}.
The proof is given in Appendix~\ref{app:symmetric_exactness},
which also shows that the assumptions force $\bD_i^\top\bD_i$
to be positive definite, so the proposition requires
$m\le d$.

\begin{figure}[t]
\centering
\includegraphics[width=0.99\linewidth]{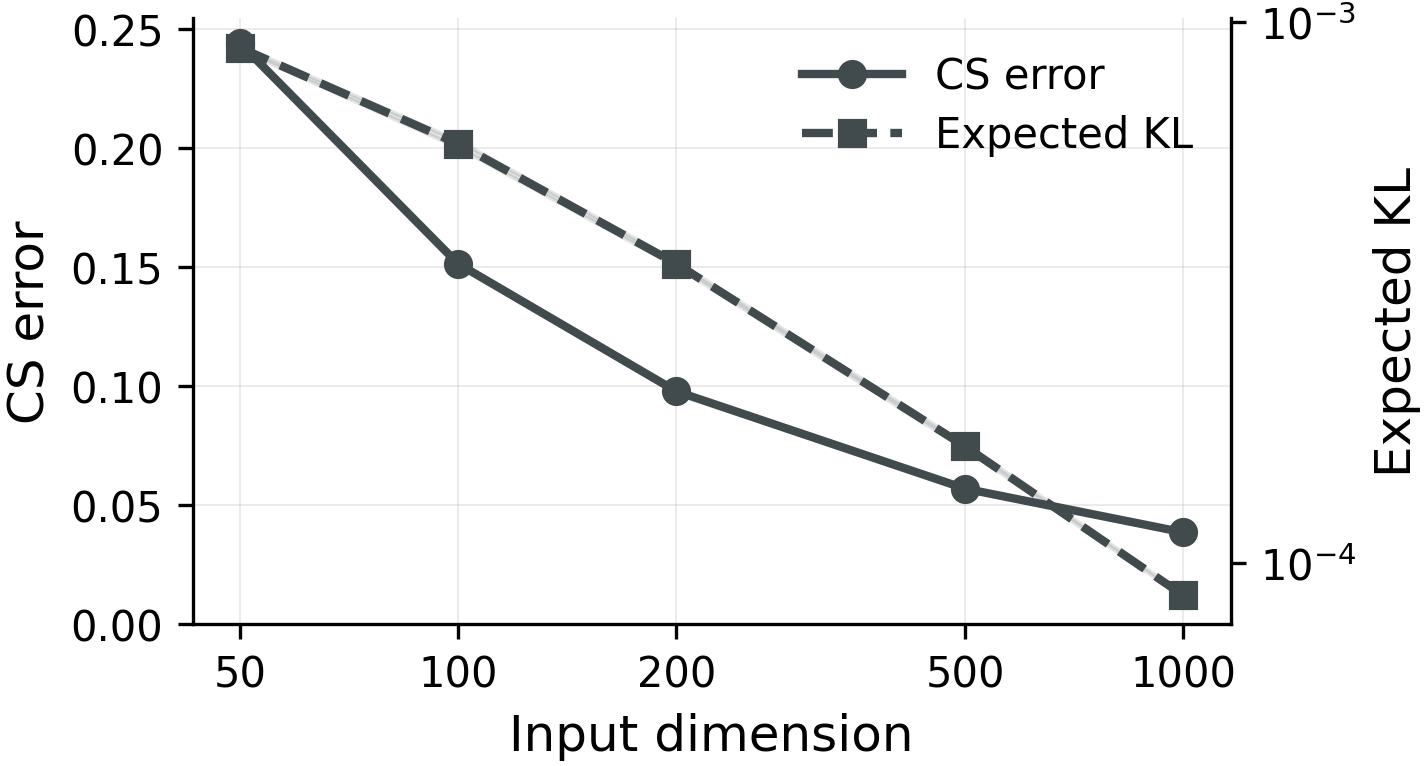}
\caption{Relative compound symmetry (CS) error (solid) and expected KL divergence from the full-gradient posterior to \lite's (dashed) as $d$ increases. We use an isotropic Matérn-$5/2$ kernel, $n=1024$ uniform inputs in $[0,1]^d$, $200$ targets, maximin ordering, and $m=30$ nearest preceding inputs.}
\label{fig:methods_symmetry}
\end{figure}

Vecchia conditioning sets can move toward this geometry.
Selecting the nearest inputs makes their distances to the
target more nearly equal than for independently sampled
inputs, and a maximin
ordering~\citep{Guinness2018PermutationAG,Schafer2020}
further equalizes the distances among the conditioning
inputs of the likelihood factors.
Such an ordering places each input greedily as far as
possible from all preceding inputs, so the inputs preceding
a target are spread as evenly as the greedy construction
allows, and no two of its conditioning inputs are closer
to each other than the target is to its nearest one.
Compound symmetry nevertheless holds only approximately;
Figure~\ref{fig:methods_symmetry} empirically shows that these conditioning sets approach compound symmetry as the input dimension increases, while the expected KL divergence from the full-gradient conditional distribution to \lite's in~\eqref{eq:methods_mean_kl_error} decreases. We quantify the deviation from compound symmetry by the
Frobenius distance of $\bD_i^\top\bD_i$ from its closest
compound-symmetry matrix, normalized by
$\|\bD_i^\top\bD_i\|_F$.

\paragraph{Posterior approximation error.}
For a general conditioning set, replacing $\bg_c$ with $\bs_c$ can change the posterior distribution of $y_i$.
Let
\[
p_i^F=p_{\bftheta}(y_i\mid\by_c,\bg_c),
\qquad
p_i^R=p_{\bftheta}(y_i\mid\by_c,\bs_c),
\]
with posterior means $\mu_i^F,\mu_i^R$ and variances $V_i^F,V_i^R$, respectively.
In this analysis, we assume that the inputs, conditioning set,
and covariance parameters are fixed, and the joint
covariance of $(y_i,\by_c,\bg_c)$ is positive definite.

Let $\bo_c$ collect the directional derivatives
along an orthonormal basis for the orthogonal complement
of $\operatorname{span}\{\bu_{ij},\bv_{ij}\}$ at each
conditioning input.
After redundant entries of $\bs_c$ are removed, the map
from $\bg_c$ to $(\bs_c,\bo_c)$ is invertible, so the pair
contains the same information as $\bg_c$.
By the definition of $\bo_c$ and \eqref{eq:methods_conditional_covariance},
$\cov(\bo_c,y_i\mid\by_c)=\bfzero$.
Under the Gaussian model, $\bo_c$ and $y_i$ are
therefore conditionally independent given $\by_c$.
This independence need not hold after conditioning
additionally on $\bs_c$, so omitting $\bo_c$
can change the target posterior.

Given $(\by_c,\bs_c)$, the reduced posterior is fixed,
whereas the full posterior additionally depends on
$\bo_c$.
We therefore quantify the information lost by omitting
$\bo_c$ through the expected KL divergence, where the
expectation is over $\bo_c\mid\by_c,\bs_c$.
Under the Gaussian model,
\begin{equation}
\bbE\left[
D_{\mathrm{KL}}(p_i^F\|p_i^R)
\mid\by_c,\bs_c
\right]
=\frac12\log\frac{V_i^R}{V_i^F}.
\label{eq:methods_mean_kl_error}
\end{equation}
Because \lite computes $V_i^R$ anyway, this error can be
evaluated exactly on a subsample of targets by computing
$V_i^F$ with TERA when $m\le d$, or with full gradients
otherwise.

To interpret the error, define
\begin{equation}
\begin{aligned}
\nu_i^y
&=
1-\frac{V_i^R}{\var(y_i\mid\by_c)},\\
\nu_i^\mu
&=
1-
\frac{
\var(\mu_i^R\mid\by_c,y_i,\bo_c)
}{
\var(\mu_i^R\mid\by_c,y_i)
},\\
\nu_i^\perp
&=
1-
\frac{
\var(\mu_i^R\mid\by_c,y_i,\bs_c^\perp,\bo_c)
}{
\var(\mu_i^R\mid\by_c,y_i)
},\\
\nu_i^{\max}
&=
\sup_{\bfomega\ne\bfzero}
\left\{
1-
\frac{
\var(\bfomega^\top\bs_c\mid\by_c,y_i,\bo_c)
}{
\var(\bfomega^\top\bs_c\mid\by_c,y_i)
}
\right\},
\end{aligned}
\label{eq:methods_error_quantities}
\end{equation}
where $\bfomega$ has the same dimension as $\bs_c$, and
$\bs_c^\perp=\bs_c-\cov(\bs_c,\mu_i^R\mid\by_c)\,
\mu_i^R/\var(\mu_i^R\mid\by_c)$ is the part of $\bs_c$
that is uncorrelated with both $\mu_i^R$ and $y_i$
given $\by_c$.
Here $0\le\nu_i^y<1$ measures how informative $\bs_c$ is
about the target beyond $\by_c$, and $\nu_i^{\max}<1$.
The quantities $\nu_i^\mu$ and $\nu_i^\perp$ are defined for
$\nu_i^y>0$ and measure how informative $\bo_c$ is about
the reduced posterior mean $\mu_i^R$ given $(\by_c,y_i)$,
without and with knowledge of $\bs_c^\perp$, while
$\nu_i^{\max}$ is the largest such measure over linear
combinations $\bfomega^\top\bs_c$.
These quantities require $\bo_c$, at $\order((md)^3)$ cost
per target, so they serve as diagnostics rather than
routine computations.

\begin{theorem}[Posterior approximation error]
\label{thm:methods_approximation_error}
Under the assumptions above, if $\nu_i^y=0$, then
$p_i^F=p_i^R$ for all values of $(\by_c,\bg_c)$.
If $\nu_i^y>0$, then $0\le\nu_i^\perp<1$ and
\begin{equation}
\bbE\left[
D_{\mathrm{KL}}(p_i^F\|p_i^R)
\mid\by_c,\bs_c
\right]
=
\frac12\log\left(
1+\frac{\nu_i^y\nu_i^\perp}{1-\nu_i^\perp}
\right).
\label{eq:methods_kl_identity}
\end{equation}
In this case, $p_i^F=p_i^R$ for all values of $(\by_c,\bg_c)$
if and only if $\nu_i^\mu=0$, which is equivalent to
$\nu_i^\perp=0$.
\end{theorem}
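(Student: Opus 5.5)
The proof reduces everything to scalar Gaussian variance identities, carried out conditionally on $\by_c$. Since everything is jointly Gaussian, I work throughout with the conditional law given $\by_c$ and write $\var(\cdot)$ for $\var(\cdot\mid\by_c)$. By the invertibility remark, conditioning on $\bg_c$ is the same as conditioning on $(\bs_c,\bo_c)$. So
\[
V_i^F=\var(y_i\mid\bs_c,\bo_c),\qquad V_i^R=\var(y_i\mid\bs_c).
\]
The expected-KL formula \eqref{eq:methods_mean_kl_error} is already available. The task is therefore to express the ratio $V_i^R/V_i^F$ through the quantities in \eqref{eq:methods_error_quantities}.

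\emph{Case $\nu_i^y=0$.} Here $V_i^R=\var(y_i)$, so $\cov(\bs_c,y_i)=\bfzero$. Together with $\cov(\bo_c,y_i)=\bfzero$, this gives $\cov(\bg_c,y_i\mid\by_c)=\bfzero$, because $\bg_c$ is a linear function of $(\bs_c,\bo_c)$. By \eqref{eq:methods_gradient_updates}, both posteriors then coincide with $p(y_i\mid\by_c)$.

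\emph{Case $\nu_i^y>0$.} The key observation is that $\mu_i^R-\bbE[y_i]$ is the linear predictor $\bfbeta^\top\bs_c$, so $y_i$ given $\bs_c$ depends on $\bs_c$ only through the scalar $\mu_i^R$. I decompose
\[
y_i=\mu_i^R+e,\qquad e\perp \bs_c,
\]
with $\var(e)=V_i^R$, $\var(\mu_i^R)=\nu_i^y\var(y_i)$, and $\var(y_i)=\var(\mu_i^R)+V_i^R$.

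Since $\cov(\bo_c,y_i)=0$, we get $\cov(\bo_c,e)=-\cov(\bo_c,\mu_i^R)$. Adding $\bo_c$ to the conditioning set, we have
\[
V_i^F=V_i^R-\frac{\text{(explained part)}}{\cdot},
\]
where the explained part comes from regressing $e$ on $\bo_c$ after removing the part of $\bo_c$ predicted by $\bs_c$. I would carry this out by changing basis in the conditioning set from $\bs_c$ to $(\mu_i^R,\bs_c^\perp)$. This change is legitimate because $\bs_c^\perp$ is uncorrelated with $\mu_i^R$ and, by construction, also with $y_i$; the latter holds since $\cov(\bs_c,y_i)=\cov(\bs_c,\mu_i^R)$. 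The natural object is then the joint law of $(y_i,\mu_i^R)$ given $(\bs_c^\perp,\bo_c)$, versus given $\bs_c^\perp$ alone.

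The cleanest route should go through the conditional variance of $\mu_i^R$ given $y_i$. Because $\bo_c$ and $\bs_c^\perp$ are uncorrelated with $y_i$, conditioning on $y_i$ does not affect how they explain $\mu_i^R$. Write
\[
A=\var(\mu_i^R\mid y_i),\qquad B=\var(\mu_i^R\mid y_i,\bs_c^\perp,\bo_c)=(1-\nu_i^\perp)A.
\]
A direct computation with the bivariate Gaussian $(y_i,\mu_i^R)$ then gives the following:
\begin{itemize}
\item $\cov(y_i,\mu_i^R)=\var(\mu_i^R)=\nu_i^y\var(y_i)$, hence $A=\nu_i^y(1-\nu_i^y)\var(y_i)$.
\item Conditioning on $(\bs_c^\perp,\bo_c)$ leaves $\var(y_i)$ and $\cov(y_i,\mu_i^R)$ unchanged and reduces $\var(\mu_i^R)$ by $A-B$.
\item $V_i^F=\var(y_i\mid\mu_i^R,\bs_c^\perp,\bo_c)$ is a Schur complement, $V_i^F=\var(y_i)-\var(\mu_i^R)^2/\bigl(\var(\mu_i^R)-\nu_i^\perp A\bigr)$.
\end{itemize}
Substituting and simplifying should yield
\[
\frac{V_i^R}{V_i^F}=1+\frac{\nu_i^y\nu_i^\perp}{1-\nu_i^\perp}.
\]
This algebra is the main obstacle: justifying the unchanged-covariance claims and tracking the Schur complements carefully. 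The bound $\nu_i^\perp<1$ follows from $B>0$, which holds by positive definiteness of the joint covariance.

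\emph{Exact recovery.} Given the identity, $p_i^F=p_i^R$ for all data holds iff $V_i^F=V_i^R$; equal variances with jointly Gaussian predictors also force equal means. This is equivalent to $\nu_i^\perp=0$.

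It remains to show that $\nu_i^\perp=0$ iff $\nu_i^\mu=0$. If $\bo_c$ explains no variance of $\mu_i^R$ given $y_i$, then $\cov(\bo_c,\mu_i^R\mid y_i)=0$. Since $\bs_c^\perp$ is uncorrelated with $\mu_i^R$ given $y_i$, adding $\bs_c^\perp$ cannot make $\bo_c$ informative: the explained variance is a quadratic form in $\cov(\cdot,\mu_i^R\mid y_i)$ whose entries vanish. So $\nu_i^\mu=0$ implies $\nu_i^\perp=0$. Conversely, $\nu_i^\perp\ge\nu_i^\mu$ by monotonicity of conditional variance, so $\nu_i^\perp=0$ implies $\nu_i^\mu=0$.
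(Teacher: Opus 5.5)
Your proposal is correct and follows essentially the paper's proof. It uses the same moments of the reduced mean, $\var(\mu_i^R)=\cov(y_i,\mu_i^R)=\nu_i^y\var(y_i)$, and the same reparametrization of $\bs_c$ as $(\mu_i^R,\bs_c^\perp)$. In both arguments, $\bs_c^\perp$ and $\bo_c$ are uncorrelated with $y_i$, so conditioning on $y_i$ does not change what they explain about $\mu_i^R$, and exact recovery comes down to the vanishing of the covariance between $\bo_c$ and $\mu_i^R$. The only difference is bookkeeping: the paper adds $\bo_c$ after $\bs_c$ and obtains $V_i^R-V_i^F$ through $\cov(\bo_c,\mu_i^R)$, $\var(\bo_c\mid\bs_c^\perp)$ and a Sherman--Morrison step, whereas you condition on $(\bs_c^\perp,\bo_c)$ first and finish with a scalar Schur complement in $\mu_i^R$, which gives $V_i^F$ directly (and your remark that equal variances force equal means is exactly the paper's law-of-total-variance step $V_i^R=V_i^F+\var(\mu_i^F\mid\by_c,\bs_c)$).
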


\begin{corollary}[Two-sided bound]
\label{cor:methods_kl_bound}
If $\nu_i^y>0$, then $\nu_i^\mu\le\nu_i^\perp$ and
$\nu_i^\perp/(1-\nu_i^\perp)\le\nu_i^\mu/(1-\nu_i^{\max})$.
Consequently,
\begin{equation}
\begin{aligned}
\frac12\log\left(
1+\frac{\nu_i^y\nu_i^\mu}{1-\nu_i^\mu}
\right)
&\le
\bbE\left[
D_{\mathrm{KL}}(p_i^F\|p_i^R)
\mid\by_c,\bs_c
\right]\\
&\le
\frac12\log\left(
1+\frac{\nu_i^y\nu_i^\mu}{1-\nu_i^{\max}}
\right).
\end{aligned}
\label{eq:methods_kl_bound}
\end{equation}
\end{corollary}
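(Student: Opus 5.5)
The plan is to obtain both inequalities from a single joint Gaussian picture conditional on $(\by_c,y_i)$, and then substitute them into the identity \eqref{eq:methods_kl_identity} of Theorem~\ref{thm:methods_approximation_error}.

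Fix $\by_c$ and condition on $y_i$ throughout. Given $\by_c$, $\mu_i^R$ is an affine function of $\bs_c$. By construction, $\bs_c^\perp$ is uncorrelated with both $\mu_i^R$ and $y_i$ given $\by_c$, so conditional on $(\by_c,y_i)$ the vector $(\mu_i^R,\bs_c^\perp)$ has block-diagonal covariance $\bD=\mathrm{diag}(A,\bfSigma)$, where $A=\var(\mu_i^R\mid\by_c,y_i)>0$. Here $A>0$ because $\nu_i^y>0$; I would check this via the law of total variance, using $y_i-\mu_i^R\perp\mu_i^R$. If $\bfSigma$ is singular, I would first restrict $\bs_c^\perp$ to a basis of its nondegenerate part.

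\textbf{First inequality.} The bound $\nu_i^\mu\le\nu_i^\perp$ is immediate. Conditioning on the additional variable $\bs_c^\perp$ cannot increase a Gaussian conditional variance, so
\[
\var(\mu_i^R\mid\by_c,y_i,\bs_c^\perp,\bo_c)\le\var(\mu_i^R\mid\by_c,y_i,\bo_c).
\]

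\textbf{Setup for the second inequality.} Let $\bC=\cov\bigl(\bbE[(\mu_i^R,\bs_c^\perp)\mid\by_c,y_i,\bo_c]\bigr)$, so that the conditional covariance given $\bo_c$ is $\bD-\bC$. Every linear combination of $(\mu_i^R,\bs_c^\perp)$ equals, given $\by_c$, a combination $\bfomega^\top\bs_c$ plus a constant. The definition of $\nu_i^{\max}$ therefore yields $\bfzero\preceq\bC\preceq\nu_i^{\max}\bD$. Set $\bK=\bD^{-1/2}\bC\bD^{-1/2}$. Then $\bfzero\preceq\bK\preceq\nu_i^{\max}\bI$, and by definition $\bK_{11}=\nu_i^\mu$.

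\textbf{Second inequality.} The Schur-complement formula for conditional variance gives
\[
\frac{1}{1-\nu_i^\perp}=\frac{A}{\var(\mu_i^R\mid\by_c,y_i,\bs_c^\perp,\bo_c)}=A\,[(\bD-\bC)^{-1}]_{11}=[(\bI-\bK)^{-1}]_{11}.
\]
The matrix inequality I would use is
\[
(\bI-\bK)^{-1}=\bI+\bK^{1/2}(\bI-\bK)^{-1}\bK^{1/2}\preceq\bI+\frac{\bK}{1-\nu_i^{\max}}.
\]
It holds because $\bK$ and $(\bI-\bK)^{-1}$ commute, and $(\bI-\bK)^{-1}\preceq(1-\nu_i^{\max})^{-1}\bI$. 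Taking the $(1,1)$ entry gives $1/(1-\nu_i^\perp)\le1+\nu_i^\mu/(1-\nu_i^{\max})$, which is exactly $\nu_i^\perp/(1-\nu_i^\perp)\le\nu_i^\mu/(1-\nu_i^{\max})$. This uses $\nu_i^{\max}<1$, as stated before the theorem.

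\textbf{Consequence.} The map $x\mapsto x/(1-x)$ is increasing on $[0,1)$, and Theorem~\ref{thm:methods_approximation_error} gives $0\le\nu_i^\perp<1$. Combining this with $\nu_i^\mu\le\nu_i^\perp$ and the second inequality yields
\[
\frac{\nu_i^\mu}{1-\nu_i^\mu}\le\frac{\nu_i^\perp}{1-\nu_i^\perp}\le\frac{\nu_i^\mu}{1-\nu_i^{\max}}.
\]
Multiplying by $\nu_i^y\ge0$, adding $1$, and applying the monotone map $\tfrac12\log$ turns \eqref{eq:methods_kl_identity} into \eqref{eq:methods_kl_bound}.

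The main obstacle is the second inequality. A naive Cauchy–Schwarz bound on $\cov(\bbE[\mu_i^R\mid\bo_c],\bbE[\bs_c^\perp\mid\bo_c])$ only gives the weaker $1-\nu_i^\perp\ge1-\nu_i^\mu/(1-\nu_i^{\max})$. The key is to encode $\nu_i^{\max}$ as the joint Loewner bound $\bC\preceq\nu_i^{\max}\bD$ and then apply the resolvent identity above.
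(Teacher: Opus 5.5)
Your proof is correct. The first inequality and the final monotonicity step are the same as in the paper, but your argument for $\nu_i^\perp/(1-\nu_i^\perp)\le\nu_i^\mu/(1-\nu_i^{\max})$ takes a different route.

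\textbf{The paper's route.} It works on the side of the omitted derivatives. It whitens $\bo_c$ by $\bfSigma_o=\var(\bo_c)$ and writes $\nu_i^\mu=\|\bfxi\|^2$, $\nu_i^\perp=\bfxi^\top(\bI-\bM)^{-1}\bfxi$ and $\nu_i^{\max}=\lambda_{\max}(\bfxi\bfxi^\top+\bM)$, built from the quantities $\bfgamma$ and $\bP$ of the theorem's proof. It then evaluates a Rayleigh quotient at $\bfphi=(\bI-\bM)^{-1}\bfxi$, applies Cauchy--Schwarz, and treats $\nu_i^\perp=0$ as a separate case.

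\textbf{Your route.} You work on the side of the retained statistics $(\mu_i^R,\bt)$, whitened by their covariance given $(\by_c,y_i)$. Your $\bK$ is the Gram matrix $B^\top B$ dual to the paper's $BB^\top=\bfxi\bfxi^\top+\bM$, where $B$ is the whitened cross-covariance between $\bo_c$ and $(\mu_i^R,\bt)$; the two matrices therefore share their nonzero eigenvalues. In your coordinates:
\begin{itemize}
\item the definition of $\nu_i^{\max}$ becomes exactly $\nu_i^{\max}=\lambda_{\max}(\bK)$;
\item $\nu_i^\mu$ is a diagonal entry of $\bK$;
\item $1/(1-\nu_i^\perp)$ is the matching entry of the resolvent $(\bI-\bK)^{-1}$.
\end{itemize}
The bound then follows from the single operator inequality $(\bI-\bK)^{-1}\preceq\bI+\bK/(1-\nu_i^{\max})$.

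\textbf{What each route buys.} Yours needs no auxiliary vector, no Cauchy--Schwarz step and no case split, since $\nu_i^\mu=0$ forces $\nu_i^\perp=0$ directly from the inequality. It also gives the analogous bound for every linear combination of $(\mu_i^R,\bt)$, not only for $\mu_i^R$. It uses only the statement of Theorem~\ref{thm:methods_approximation_error}, not the internals of its proof. The paper's route instead reuses $\bfgamma$, $\bP$ and $\zeta$ from that proof, so the corollary stays in one notation.

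One small point: you take $\nu_i^{\max}<1$ from the text, whereas the paper proves it inside this step. In your setup it is immediate: your $\bD-\bC$ is the covariance of $(\mu_i^R,\bt)$ given $(\by_c,y_i,\bo_c)$, hence positive definite, so $\bK\prec\bI$. It is worth saying so explicitly.
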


The identity \eqref{eq:methods_kl_identity} separates the
error into how informative $\bs_c$ is about the target,
$\nu_i^y$, and how much of the remaining variation in the
reduced posterior mean the omitted derivatives explain,
$\nu_i^\perp$.
If $\nu_i^{\max}\le1-\epsilon$ for some $\epsilon>0$, the
upper bound in \eqref{eq:methods_kl_bound} is at most
$\frac12\log(1+\nu_i^\mu/\epsilon)\le\nu_i^\mu/(2\epsilon)$,
so $\nu_i^\mu\ll\epsilon$ guarantees a small error even
when $\bs_c$ explains most of the target variance given
$\by_c$.
The approximation is exact if and only if
$\cov(\bo_c,\mu_i^R\mid\by_c)=\bfzero$, which, for $\nu_i^y>0$, is equivalent
to $\nu_i^\mu=0$ because $\cov(\bo_c,y_i\mid\by_c)=\bfzero$;
compound symmetry
(Proposition~\ref{prop:methods_symmetric_exactness}) is one
such setting.
Conversely, the lower bound implies a large error when
$\nu_i^y$ is substantial and $\nu_i^\mu$ is close to one.
The proofs are given in Appendix~\ref{app:approximation_error}.

\subsection{Parameter Learning and Prediction}
\label{sec:methods_learning_prediction}

\paragraph{Conditional likelihood.}
We estimate the covariance parameters by using the conditional log-likelihood
\begin{equation}
\tilde\ell_n(\bftheta)
=
\sum_{i=1}^n
\log p_{\bftheta}
\bigl(y_i\mid\by_{c(i)},\bs_{c(i)}\bigr).
\label{eq:methods_conditional_likelihood}
\end{equation}
We use stochastic updates based on minibatches of conditional factors~\citep{Cao2022ScalableGR,Jimenez2022ScalableBO}.
For each sampled target, we recompute the two
directions at the current covariance parameters then hold them fixed in the
original coordinates during differentiation. 
The covariance blocks remain functions of $\bftheta$,
while the ordering and conditioning sets are held
fixed during each update~\citep{Katzfuss2020ScaledVA}.
With the directions fixed, each update differentiates
a well-defined Gaussian likelihood of fixed linear
statistics of the gradients, just as fixing the ordering
and conditioning sets gives a fixed Vecchia approximation
within each update.

\paragraph{Batched inference.}
For each prediction input $\bx_\star$, we select
$c(\star)$ using the fitted input metric and construct
the two directions.
Let
\[
\bz_\star^R
=
\begin{pmatrix}
\by_{c(\star)}\\
\bs_{c(\star)}
\end{pmatrix},
\quad
\bK_\star=\cov(\bz_\star^R),
\quad
\bk_\star=\cov(\bz_\star^R,f_\star),
\]
where all moments are evaluated at $\hat{\bftheta}$.
Both directions lie in the column space of $\bD_\star$,
so $\bK_\star$ and $\bk_\star$ can be assembled directly
from $\bH_\star=\bD_\star^\top\bD_\star$.

For a batch $\mathcal B$ 
of $B$ targets, each with $m$ conditioning inputs, 
we stack the $B$ matrices
$\bK_\star$ of order $3m$. 
Batched Cholesky factorizations and triangular solves then
give all predictive means and variances in
\eqref{eq:prelim_posterior_updates} at once. 
Thus, covariance assembly and prediction can be parallelized across targets using matrices of order $3m$, allowing larger prediction batches within a fixed memory budget. 
The covariance construction and batched formulas are given in Appendix~\ref{app:reduced_covariances}.

\paragraph{Computational complexity.}
For each target, constructing the Gram matrix and
computing the directional derivatives cost
$\order(dm^2)$, while direction coefficient computation,
covariance assembly, and factorization cost
$\order(m^3)$.
A batch of $B$ targets therefore requires
\[
\order\bigl(B(dm^2+m^3)\bigr)
\ \ \text{time},
\quad
\order\bigl(B(dm+m^2)\bigr)
\ \ \text{memory}.
\]
Summed over the $n$ factors of
\eqref{eq:methods_conditional_likelihood}, one likelihood
evaluation costs $\order(n(dm^2+m^3))$, and the $dm^2$
assembly term dominates when $d>m$.
These costs exclude full dataset storage, ordering,
and conditioning set construction.
Table~\ref{tab:methods_cost} compares them with TERA and
with full gradients.

\begin{table}[t]
\centering
\caption{Per-target cost of one Vecchia conditional factor
with $m$ conditioning inputs in $d$ dimensions.
TERA's entries assume $m\le d$; when $m>d$, the full-gradient factor is smaller.
}
\label{tab:methods_cost}
\small
\setlength{\tabcolsep}{5pt}
\begin{tabular}{lccc}
\toprule
 & Factor size & Time & Memory \\
\midrule
Full grad & $m(d+1)$ & $\order(m^3d^3)$ & $\order(m^2d^2)$ \\
TERA & $m^2+m$ & $\order(dm^2+m^6)$ & $\order(dm^2+m^4)$ \\
\lite & $3m$ & $\order(dm^2+m^3)$ & $\order(dm+m^2)$ \\
\bottomrule
\end{tabular}
\end{table}

\section{Experiments}
\label{sec:experiment}
\begin{figure*}[t]
\centering
\includegraphics[width=0.72\linewidth]{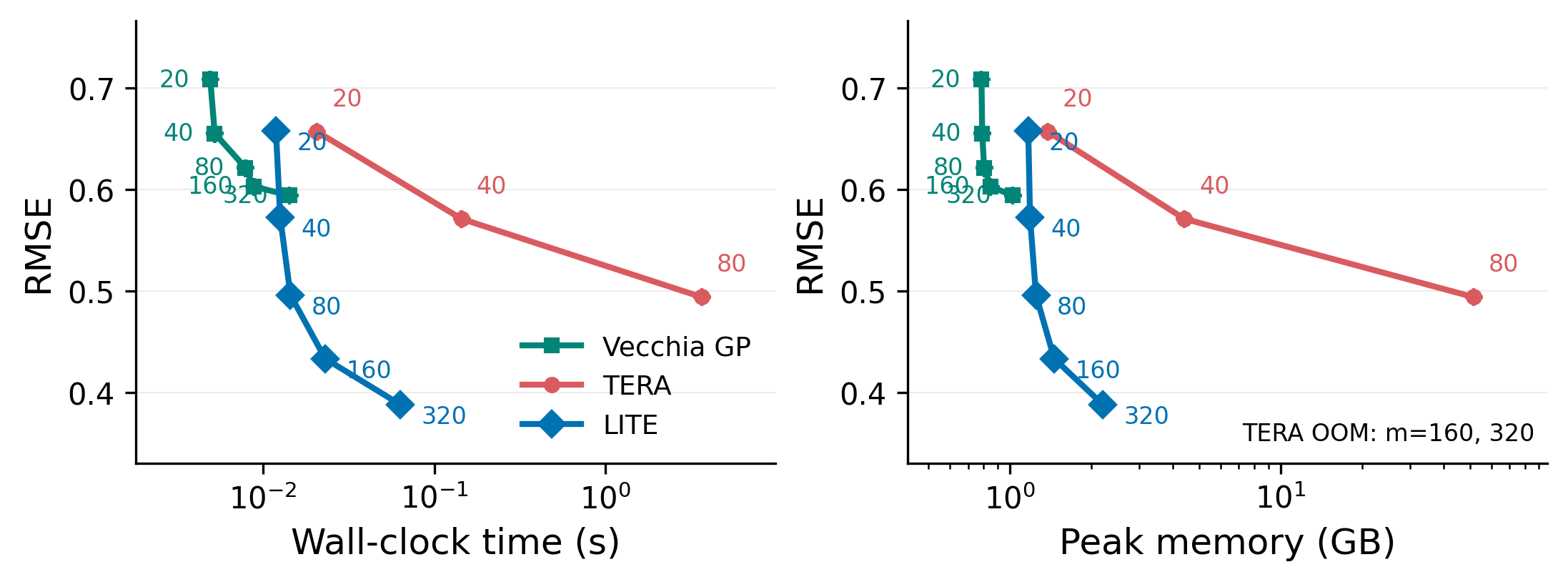}
\includegraphics[width=0.72\linewidth]{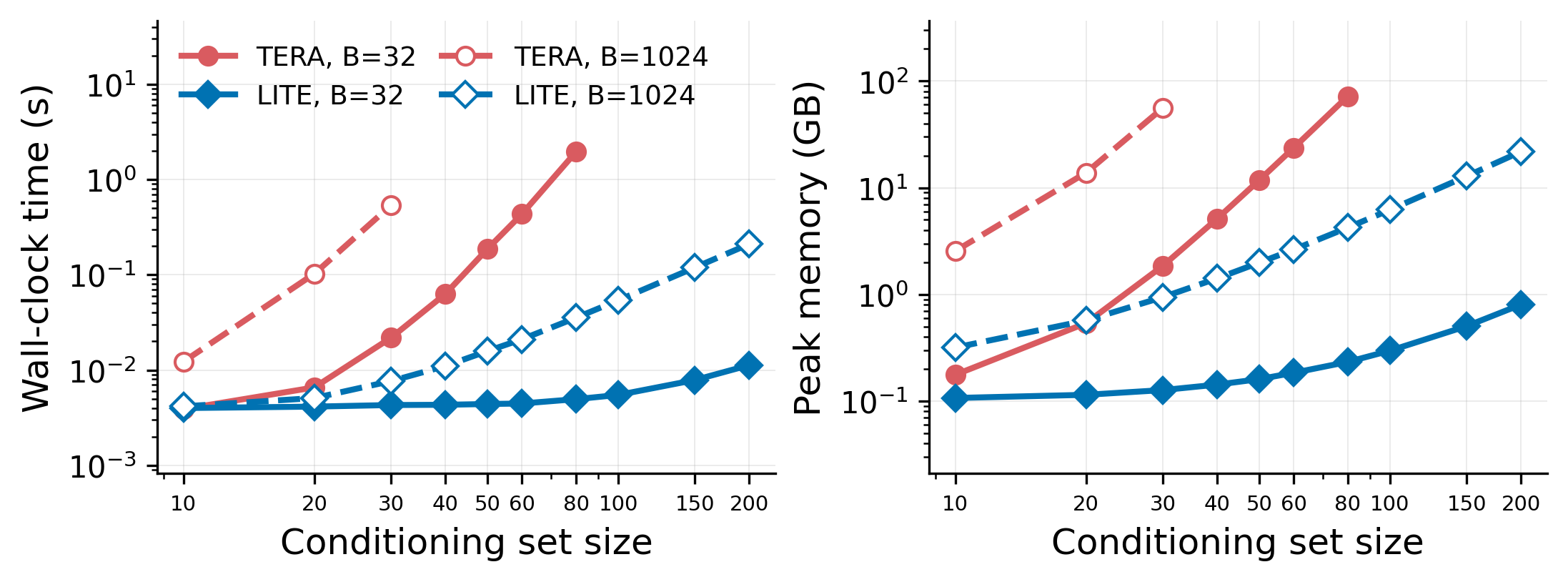}
\caption{RMSE under the assumed GP against total prediction time for $256$ targets and peak GPU memory (top). Wall-clock time and peak GPU memory of one training step against $m$ on the Buckyball-catcher dataset (bottom). Point labels in the top panels indicate $m$.
Prediction uses $B=32$, while the bottom panels use
$B=32$ and $B=1{,}024$.
TERA exceeds the $80$ GB GPU memory for $m\ge160$
in prediction; missing TERA points in the bottom panels exceed the GPU memory.}
\label{fig:experiment_scaling}
\end{figure*}

\begin{figure*}[t]
\centering
\includegraphics[width=0.99\linewidth]{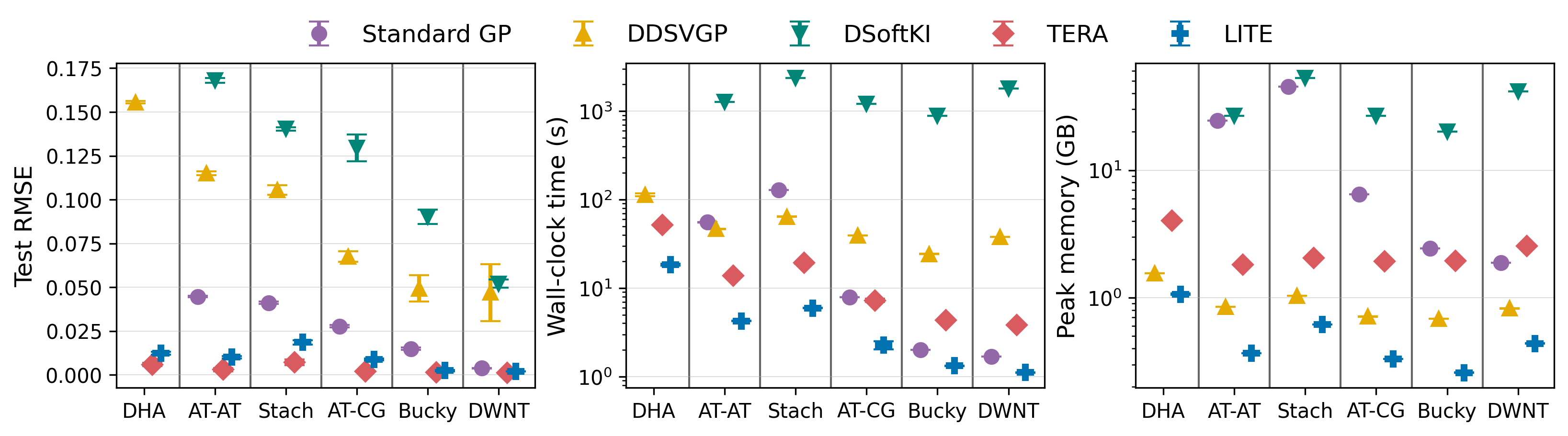}
\caption{Test RMSE per atom, wall-clock time, and peak GPU memory on six MD22 molecular datasets. \lite achieves the lowest runtime and memory usage on every benchmark, running $2.8$--$3.5\times$ faster than TERA while using only $13.3$--$30.2\%$ of its memory. Its RMSE is higher than TERA's but lower than that of the other methods. Standard GP and DSoftKI run out of memory on an 80 GB GPU when applied to DHA, whereas \lite requires only $1.06$ GB. Results are averaged over five independent runs, with error bars indicating standard errors.}
\label{fig:experiment_regression_md22}
\end{figure*}

We evaluate whether two directions per gradient retain
useful predictive information at reduced cost, and whether
these savings allow larger conditioning sets to improve
the accuracy-cost tradeoff. We assess prediction under a known GP, regression on molecular data, and high-dimensional BO.
We consider settings where gradient observations are available in existing datasets~\citep{Chmiela2016MachineLO} or obtained by differentiating the underlying models~\citep{Baydin2015AutomaticDI,Bertrand2020ImplicitDO}.
Our implementation uses PyTorch~\citep{Paszke2019PyTorchAI},
with GPyTorch~\citep{Gardner2018GPyTorchBM} and
BoTorch~\citep{Balandat2019BoTorchPB} where applicable.
All experiments run on a single NVIDIA H100 GPU. 
We use double precision for all methods except DSoftKI,
which uses single precision. To compare algorithmic costs under a common batching strategy, we replace TERA's target-wise sequential computations with batched training and prediction, using the same batch sizes as \lite. Unless otherwise specified, they share the same experimental settings. Experimental details and additional results 
are provided in Appendices~\ref{app:experiment_settings} and
\ref{app:additional_results}, 
respectively.

\begin{figure*}[t]
\centering
\includegraphics[width=0.99\linewidth]{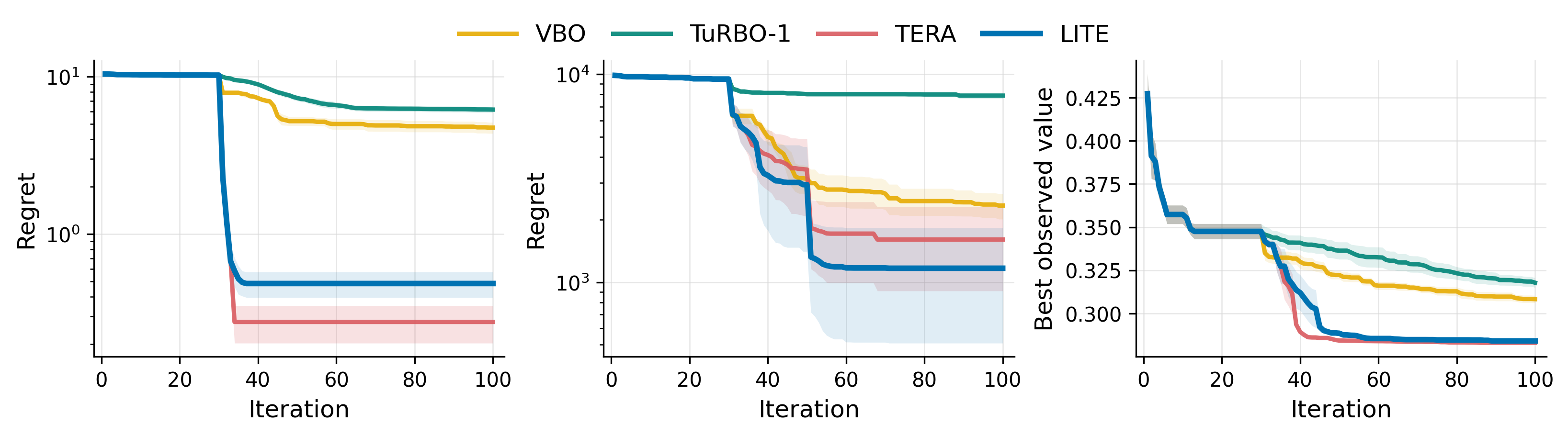}
\caption{Simple regret on Ackley ($500D$, left), Levy ($800D$, center), and best observed value on LassoDNA ($180D$, right) versus iteration. With only two directions per gradient, \lite substantially outperforms the function-only baselines, VBO and TuRBO-1, on all three tasks. It achieves the lowest final regret on Levy and a final value comparable to TERA on LassoDNA, while TERA attains lower regret on Ackley. Curves average five independent runs, with shaded bands indicating standard errors.}
\label{fig:experiment_bo}
\end{figure*}

\subsection{Scaling to Larger Conditioning Sets and Batch Sizes}
\label{sec:experiments_scaling}
We first examine whether the lower cost of \lite lets larger conditioning sets improve accuracy.
We evaluate prediction under an ARD Mat\'ern-$5/2$ GP
with $n=100,000$ inputs in $[0,1]^{500}$ and substantial
function and gradient observation noise.
This noise weakens the screening effect of nearby
observations, allowing predictive accuracy to continue
improving as $m$ grows.
With covariance parameters fixed, we compute the RMSE analytically over $256$ targets, avoiding
joint sampling at this scale.
We vary $m\in\{20,40,80,160,320\}$.
Evaluating prediction error under the assumed GP with fixed covariance parameters 
isolates the effect of the conditioning set size from model misspecification and parameter estimation.

\paragraph{Larger conditioning sets at a fraction of the cost.}
At equal conditioning set size, \lite matches the accuracy of TERA (top panels of Figure~\ref{fig:experiment_scaling}), so the two-direction budget costs little accuracy in this setting.
The error of \lite keeps decreasing up to $m=320$, whereas TERA exceeds the GPU memory for $m\ge160$ at the tested batch size of $32$.
At $m=320$, \lite attains a lower RMSE than TERA at $m=80$, approximately $0.39$ versus $0.49$, while running $58\times$ faster and using only $4.3\%$ of its memory. 
Thus, when additional conditioning inputs improve
prediction, \lite's lower cost allows larger conditioning
sets to outweigh the information lost by the reduction. Both derivative methods also achieve lower error than
the function-only Vecchia GP at equal $m$, showing that
gradients remain informative in this noisy setting.

\paragraph{Cost per training step.}
The bottom panels of Figure~\ref{fig:experiment_scaling} show the cost of one training step on the MD22 Buckyball-catcher dataset.
With batches of $32$ targets, the step time of \lite stays nearly constant up to $m=100$ because fixed per-step costs dominate its small factorizations, whereas the step time of TERA grows steeply as its $\order(m^6)$ factorization takes over, and TERA exceeds the GPU memory for $m\ge100$.
With batches of $1{,}024$ targets, \lite is about $70\times$ faster than TERA already at $m=30$, and TERA exceeds the GPU memory for larger $m$.
These fixed per-step costs also explain the moderate speedups at $m=30$ with batches of $32$ in the experiments below.

\subsection{Large-Scale GP Regression}
\label{sec:experiment_regression}
For GP regression, we use six MD22 molecular benchmarks~\citep{Chmiela2022AccurateGM}, with up to $69{,}753$ samples and $1{,}110$ input dimensions. We predict held-out molecular energies from atomic configurations, using observed forces as gradient information.
We compare \lite with three derivative GP methods, TERA, DDSVGP, DSoftKI, and a function-only Standard GP.
Figure~\ref{fig:experiment_regression_md22} compares test RMSE per atom, wall-clock time, and peak GPU memory.

\paragraph{Competitive accuracy with lowest computational costs.}
\lite achieves substantially lower RMSE than DDSVGP and DSoftKI and outperforms the function-only Standard GP on every benchmark.
Across all six benchmarks, it achieves the lowest runtime and
peak GPU memory usage. These savings also extend to the function-only Standard GP, showing that incorporating gradient information need not increase computational cost. 
Despite retaining only two directions per gradient, it achieves predictive accuracy slightly lower than TERA on several benchmarks. The comparison with TERA illustrates the tradeoff between exact gradient reduction and a fixed two-direction budget. Appendix~\ref{app:md22_shared_parameters} examines whether
shared covariance parameters, larger conditioning sets,
or further training resolve this accuracy gap.

\subsection{High-Dimensional Bayesian Optimization}
We evaluate BO on Ackley and Levy in $500$ and $800$ dimensions, respectively, and on the real-world LassoDNA~\citep{Sehic2022LassoBench} hyperparameter optimization task in $180$ dimensions.
We compare \lite with the derivative GP method TERA and the function-only methods VBO~\citep{Hvarfner2024VanillaBO} and TuRBO-1~\citep{Eriksson2019ScalableGO}.
Figure~\ref{fig:experiment_bo} compares optimization performance.

\paragraph{Effective optimization with two directions.}
On both synthetic tasks, \lite achieves substantially lower regret than the function-only baselines. It attains the lowest final regret on Levy, while TERA achieves
lower regret on Ackley.
On LassoDNA, \lite reaches a final
objective value comparable to TERA and substantially lower
than both function-only baselines. The advantage over function-only baselines thus extends
beyond synthetic objectives to a real-world optimization task.
These results show that two directions per gradient
preserve useful information for effective optimization in the tested $180$--$800$-dimensional tasks.
Runtime and memory comparisons are provided in
Appendix~\ref{app:bo_computational_cost}.

\section{Conclusions}
\label{sec:conclusion}
We proposed \lite, a derivative GP that retains two directional derivatives per observed gradient. 
Within a Vecchia approximation, this reduces the per-target factorization cost to $\order(m^3)$ and the covariance memory to $\order(m^2)$, which permits much larger conditioning sets and batched prediction over many targets.
The reduction is exact under compound symmetry, and for general conditioning sets we characterized the expected KL divergence from the posterior using full gradients through interpretable variance ratios.
Empirically, \lite matches the accuracy of TERA at equal conditioning set size in simulations and, with larger conditioning sets beyond TERA's memory limit, reaches lower error at a small fraction of TERA's time and memory.
On the GP regression and high-dimensional BO benchmarks, it remains competitive while using substantially less memory.

\bibliographystyle{plainnat}
\bibliography{main}

\clearpage
\appendix
\thispagestyle{empty}

\onecolumn
\aistatstitle{Appendix}
\section{Proofs for Section~\ref{sec:method}}

\subsection{Proof of Proposition~\ref{prop:methods_symmetric_exactness}}
\label{app:symmetric_exactness}

\begin{proof}
Recall that
\[
\bD_i
=
[\bx_{j_1}-\bx_i,\ldots,\bx_{j_m}-\bx_i].
\]
In the original coordinates, the Gram condition is
\[
\bD_i^\top\bfLambda_{\bftheta}\bD_i
=a\bI_m+b\mathbf1\mathbf1^\top.
\]
The transformation
$\tilde{\bx}_j=\bfLambda_{\bftheta}^{1/2}\bx_j$ and
$\tilde{\bg}_j=\bfLambda_{\bftheta}^{-1/2}\bg_j$
is invertible and leaves the directional derivative
statistics unchanged.
It therefore suffices to prove the result in scaled
coordinates, where the Gram condition becomes
$\tilde{\bD}_i^\top\tilde{\bD}_i
=a\bI_m+b\mathbf1\mathbf1^\top$.
As in Section~\ref{sec:methods_directions}, we omit
the tildes here for notational simplicity.

We will show that each gradient block of
\[
\bK_{\bg_c\bg_c\mid\by_c}^{-1}
\bk_{\bg_c y_i\mid\by_c}
\]
lies in $\operatorname{span}\{\bu_{ij},\bv_{ij}\}$.
By \eqref{eq:methods_gradient_updates}, this implies
that the full conditional mean depends on the gradients
only through $\bs_c$.

The assumed identity
$\bD_i^\top\bD_i=a\bI_m+b\mathbf1\mathbf1^\top$
implies that
\begin{equation}
\begin{aligned}
\|\bx_j-\bx_i\|^2&=a+b,
&&j\in c,\\
\|\bx_j-\bx_{j'}\|^2&=2a,
&&j,j'\in c,\quad j\ne j',
\end{aligned}
\label{eq:app_equal_distance}
\end{equation}
where the second equality follows from
\[
\begin{aligned}
\|\bx_j-\bx_{j'}\|^2
&=\|(\bx_j-\bx_i)-(\bx_{j'}-\bx_i)\|^2\\
&=\|\bx_j-\bx_i\|^2+\|\bx_{j'}-\bx_i\|^2-2(\bx_j-\bx_i)^\top(\bx_{j'}-\bx_i)\\
&=(a+b)+(a+b)-2b
=2a.
\end{aligned}
\]
Thus all conditioning inputs have the same distance
to the target, and all distinct conditioning inputs
have the same distance to one another.

Let $\bar{\bx}_c=m^{-1}\sum_{j\in c}\bx_j$.
By definition,
\[
\sum_{j\in c}(\bx_j-\bar{\bx}_c)
=\sum_{j\in c}\bx_j-m\bar{\bx}_c
=\bfzero.
\]
The Gram condition gives
\[
(\bx_j-\bx_i)^\top(\bx_{j'}-\bx_i)
=a\mathbbm{1}\{j=j'\}+b.
\]
Averaging over $j'\in c$ yields
\[
\begin{aligned}
(\bx_j-\bx_i)^\top(\bar{\bx}_c-\bx_i)
&=\frac1m\sum_{j'\in c}
(\bx_j-\bx_i)^\top(\bx_{j'}-\bx_i)\\
&=\frac{a+mb}{m}
=\frac am+b.
\end{aligned}
\]
Since this holds for every $j\in c$, averaging again
over $j$ gives
\[
\begin{aligned}
\|\bar{\bx}_c-\bx_i\|^2
&=\frac1m\sum_{j\in c}
(\bx_j-\bx_i)^\top(\bar{\bx}_c-\bx_i)\\
&=\frac am+b.
\end{aligned}
\]

We now use
$\bx_j-\bar{\bx}_c
=(\bx_j-\bx_i)-(\bar{\bx}_c-\bx_i)$
to compute the inner products after centering.
For any $j,j'\in c$,
\[
\begin{aligned}
(\bx_j-\bar{\bx}_c)^\top(\bx_{j'}-\bar{\bx}_c)
&=(\bx_j-\bx_i)^\top(\bx_{j'}-\bx_i)
-(\bx_j-\bx_i)^\top(\bar{\bx}_c-\bx_i)
-(\bar{\bx}_c-\bx_i)^\top(\bx_{j'}-\bx_i)
+\|\bar{\bx}_c-\bx_i\|^2\\
&=a\mathbbm{1}\{j=j'\}+b
-2\left(\frac am+b\right)+\left(\frac am+b\right)\\
&=a\left(\mathbbm{1}\{j=j'\}-\frac1m\right).
\end{aligned}
\]
Similarly,
\[
\begin{aligned}
(\bx_j-\bar{\bx}_c)^\top(\bar{\bx}_c-\bx_i)
&=(\bx_j-\bx_i)^\top(\bar{\bx}_c-\bx_i)
-\|\bar{\bx}_c-\bx_i\|^2\\
&=\left(\frac am+b\right)-\left(\frac am+b\right)
=0.
\end{aligned}
\]
Thus
\begin{equation}
\begin{aligned}
(\bx_j-\bar{\bx}_c)^\top(\bx_{j'}-\bar{\bx}_c)
&=a\left(\mathbbm{1}\{j=j'\}-\frac1m\right),\\
(\bx_j-\bar{\bx}_c)^\top(\bar{\bx}_c-\bx_i)&=0.
\end{aligned}
\label{eq:app_symmetric_geometry}
\end{equation}
In particular, each centered input difference is
orthogonal to the vector from the target to the
mean of the conditioning inputs.

Under the independent observation noise model,
\[
\cov(y_j,y_{j'})
=
\kappa_{\bftheta}\!\left(\|\bx_j-\bx_{j'}\|^2\right)
+\sigma_y^2\mathbbm{1}\{j=j'\}.
\]
Since $i\notin c$, \eqref{eq:app_equal_distance} gives
\[
\cov(y_j,y_i)
=\kappa_{\bftheta}\!\left(\|\bx_j-\bx_i\|^2\right)
=\kappa_{\bftheta}(a+b),
\qquad j\in c.
\]
Stacking these identical entries yields
\[
\cov(\by_c,y_i)=\kappa_{\bftheta}(a+b)\mathbf1.
\]
Similarly, for $j,j'\in c$,
\[
\cov(y_j,y_{j'})
=
\begin{cases}
\kappa_{\bftheta}(0)+\sigma_y^2, & j=j',\\
\kappa_{\bftheta}(2a), & j\ne j'.
\end{cases}
\]
Thus $\var(\by_c)$ can be written as
\[
\var(\by_c)
=
\bigl[\kappa_{\bftheta}(0)+\sigma_y^2
-\kappa_{\bftheta}(2a)\bigr]\bI_m
+\kappa_{\bftheta}(2a)\mathbf1\mathbf1^\top.
\]

Applying the Sherman-Morrison formula gives
\begin{equation}
\begin{aligned}
\var(\by_c)^{-1}
&=
\frac{1}
{\kappa_{\bftheta}(0)+\sigma_y^2-\kappa_{\bftheta}(2a)}\\
&\quad\times
\left[
\bI_m-
\frac{\kappa_{\bftheta}(2a)\mathbf1\mathbf1^\top}
{\kappa_{\bftheta}(0)+\sigma_y^2
+(m-1)\kappa_{\bftheta}(2a)}
\right].
\end{aligned}
\label{eq:yc_variance_inverse}
\end{equation}
Therefore, we obtain
\[
\begin{aligned}
\bw_i
&=\var(\by_c)^{-1}\cov(\by_c,y_i)\\
&=
\frac{\kappa_{\bftheta}(a+b)}
{\kappa_{\bftheta}(0)+\sigma_y^2-\kappa_{\bftheta}(2a)}
\left[
1-\frac{m\kappa_{\bftheta}(2a)}
{\kappa_{\bftheta}(0)+\sigma_y^2
+(m-1)\kappa_{\bftheta}(2a)}
\right]\mathbf1\\
&=
\frac{\kappa_{\bftheta}(a+b)}
{\kappa_{\bftheta}(0)+\sigma_y^2
+(m-1)\kappa_{\bftheta}(2a)}
\mathbf1.
\end{aligned}
\]

Since $\eta_{ji}=-2\kappa_{\bftheta}'(a+b)$ for every
$j\in c$, the first direction is
\[
\bu_{ij}
=-2\kappa_{\bftheta}'(a+b)(\bx_i-\bx_j).
\]
For the second direction, all entries of $\bw_i$ are equal
and $\eta_{jj'}=-2\kappa_{\bftheta}'(2a)$ whenever $j\ne j'$.
The term with $j'=j$ vanishes, so
\[
\begin{aligned}
\bv_{ij}
&=-2[\bw_i]_1\kappa_{\bftheta}'(2a)
\sum_{\substack{j'\in c\\j'\ne j}}(\bx_{j'}-\bx_j)\\
&=-2m[\bw_i]_1\kappa_{\bftheta}'(2a)
(\bar{\bx}_c-\bx_j).
\end{aligned}
\]

The expressions for the two directions and their assumed
linear independence imply
\[
\operatorname{span}\{\bu_{ij},\bv_{ij}\}
=
\operatorname{span}
\{\bx_j-\bar{\bx}_c,\bar{\bx}_c-\bx_i\},
\]
because
$\bx_i-\bx_j
=-(\bx_j-\bar{\bx}_c)-(\bar{\bx}_c-\bx_i)$.

Accordingly, let $\mathcal S$ be the space of stacked vectors
$\bh=(\bh_{j_1}^\top,\ldots,\bh_{j_m}^\top)^\top$
whose blocks satisfy
\[
\bh_j
=\alpha(\bx_j-\bar{\bx}_c)
+\beta(\bar{\bx}_c-\bx_i),
\qquad j\in c,
\]
for common coefficients $\alpha,\beta\in\bbR$.
Every block of a vector in $\mathcal S$ thus belongs
to the corresponding span of the two directions.

We will verify that
\begin{equation}
\bk_{\bg_c y_i\mid\by_c}\in\mathcal S,
\qquad
\bK_{\bg_c\bg_c\mid\by_c}\bh\in\mathcal S
\quad\text{for every }\bh\in\mathcal S.
\label{eq:app_symmetric_invariance}
\end{equation}
The second property, together with positive definiteness,
ensures that the inverse conditional covariance also maps
$\mathcal S$ into itself.
The two properties therefore place the required coefficient
vector in $\mathcal S$.

The expressions for the two directions, together with
$\cov(\bg_j,y_i\mid\by_c)=\bu_{ij}-\bv_{ij}$
in \eqref{eq:methods_conditional_covariance}, give
\[
\bk_{\bg_c y_i\mid\by_c}\in\mathcal S.
\]

For the second property, write
\[
\bK_{\bg_c\bg_c\mid\by_c}
=
\bK_{\bg_c\bg_c}
-\bK_{\bg_c\by_c}\bK_{\by_c\by_c}^{-1}
 \bK_{\by_c\bg_c}.
\]
We show that both terms on the right map $\mathcal S$
into itself.
Take $\bh\in\mathcal S$.
For $j\ne j'$, \eqref{eq:app_symmetric_geometry} gives
\[
\begin{aligned}
(\bx_j-\bx_{j'})^\top\bh_{j'}
&=
\alpha(\bx_j-\bx_{j'})^\top(\bx_{j'}-\bar{\bx}_c)\\
&\quad+
\beta(\bx_j-\bx_{j'})^\top(\bar{\bx}_c-\bx_i)\\
&=
\alpha\left[
(\bx_j-\bar{\bx}_c)^\top(\bx_{j'}-\bar{\bx}_c)
-\|\bx_{j'}-\bar{\bx}_c\|^2
\right]\\
&\quad+
\beta\left[
(\bx_j-\bar{\bx}_c)^\top(\bar{\bx}_c-\bx_i)
-(\bx_{j'}-\bar{\bx}_c)^\top(\bar{\bx}_c-\bx_i)
\right]\\
&=
\alpha\left[-\frac am-a\left(1-\frac1m\right)\right]
+\beta(0-0)\\
&=-\alpha a.
\end{aligned}
\]
Also,
\[
\begin{aligned}
\sum_{\substack{j'\in c\\j'\ne j}}\bh_{j'}
&=
\alpha\sum_{\substack{j'\in c\\j'\ne j}}
(\bx_{j'}-\bar{\bx}_c)
+(m-1)\beta(\bar{\bx}_c-\bx_i)\\
&=
\alpha\left[
\sum_{j'\in c}(\bx_{j'}-\bar{\bx}_c)
-(\bx_j-\bar{\bx}_c)
\right]
+(m-1)\beta(\bar{\bx}_c-\bx_i)\\
&=-\alpha(\bx_j-\bar{\bx}_c)
+(m-1)\beta(\bar{\bx}_c-\bx_i),\\[1ex]
\sum_{j'\in c}(\bx_j-\bx_{j'})
&=m(\bx_j-\bar{\bx}_c).
\end{aligned}
\]
Substituting these identities into
\eqref{eq:app_gg_covariance} yields
\[
\begin{aligned}
\sum_{j'\in c}\cov(\bg_j,\bg_{j'})\bh_{j'}&=\alpha\bigl[
-2\kappa_{\bftheta}'(0)+\sigma_g^2
+2\kappa_{\bftheta}'(2a)
+4ma\kappa_{\bftheta}''(2a)
\bigr](\bx_j-\bar{\bx}_c)\\
&\qquad+\beta\bigl[
-2\kappa_{\bftheta}'(0)+\sigma_g^2
-2(m-1)\kappa_{\bftheta}'(2a)
\bigr](\bar{\bx}_c-\bx_i).
\end{aligned}
\]
The coefficients are again independent of $j$, so
$\bK_{\bg_c\bg_c}\bh\in\mathcal S$.

It remains to show that the conditioning adjustment
\[
\bK_{\bg_c\by_c}\bK_{\by_c\by_c}^{-1}
\bK_{\by_c\bg_c}\bh
\]
belongs to $\mathcal S$.
Each entry corresponding to $y_j$ in
$\bK_{\by_c\bg_c}\bh$ is
\[
\begin{aligned}
\sum_{j'\in c}\cov(y_j,\bg_{j'})\bh_{j'}
&=-2\kappa_{\bftheta}'(2a)
\sum_{\substack{j'\in c\\j'\ne j}}
(\bx_j-\bx_{j'})^\top\bh_{j'}\\
&=2\alpha a(m-1)\kappa_{\bftheta}'(2a).
\end{aligned}
\]
Thus $\bK_{\by_c\bg_c}\bh$ is proportional to $\mathbf1$.
The Sherman-Morrison expression in
\eqref{eq:yc_variance_inverse} shows that
$\bK_{\by_c\by_c}^{-1}\mathbf1$ is also proportional
to $\mathbf1$.

Finally, each gradient block of
$\bK_{\bg_c\by_c}\mathbf1$ is
\[
\begin{aligned}
\sum_{j'\in c}\cov(\bg_j,y_{j'})
&=-2\kappa_{\bftheta}'(2a)
\sum_{j'\in c}(\bx_{j'}-\bx_j)\\
&=2m\kappa_{\bftheta}'(2a)(\bx_j-\bar{\bx}_c),
\end{aligned}
\]
where the term with $j'=j$ vanishes.
Combining these expressions shows that each block of
$\bK_{\bg_c\by_c}\bK_{\by_c\by_c}^{-1}
\bK_{\by_c\bg_c}\bh$ is a common scalar multiple of
$\bx_j-\bar{\bx}_c$.
Thus this vector belongs to $\mathcal S$.
Together with $\bK_{\bg_c\bg_c}\bh\in\mathcal S$,
this proves \eqref{eq:app_symmetric_invariance}.

By \eqref{eq:app_symmetric_invariance} and assumed positive
definiteness,
\[
\bK_{\bg_c\bg_c\mid\by_c}^{-1}
\bk_{\bg_c y_i\mid\by_c}\in\mathcal S.
\]
Hence its block corresponding to $\bg_j$ lies in
$\operatorname{span}\{\bu_{ij},\bv_{ij}\}$.
Each gradient then enters the conditional mean only through
$\bu_{ij}^\top\bg_j$ and $\bv_{ij}^\top\bg_j$,
with the centering terms determined by $\by_c$.

Thus the full conditional mean $\mu_{y_i\mid\by_c,\bg_c}$
is a function of $(\by_c,\bs_c)$ only.
Since $\bs_c$ is a function of $\bg_c$, the tower property
gives
\[
\mu_{y_i\mid\by_c,\bs_c}
=\bbE\bigl[\mu_{y_i\mid\by_c,\bg_c}\mid\by_c,\bs_c\bigr]
=\mu_{y_i\mid\by_c,\bg_c}.
\]
The full conditional variance $\sigma^2_{y_i\mid\by_c,\bg_c}$
does not depend on the observed values, so the law of total
variance gives
\[
\sigma^2_{y_i\mid\by_c,\bs_c}
=\bbE\bigl[\sigma^2_{y_i\mid\by_c,\bg_c}\mid\by_c,\bs_c\bigr]
+\var\bigl(\mu_{y_i\mid\by_c,\bg_c}\mid\by_c,\bs_c\bigr)
=\sigma^2_{y_i\mid\by_c,\bg_c}.
\]
Therefore, replacing $\bg_c$ with $\bs_c$ preserves
both the conditional mean and variance updates in
\eqref{eq:methods_gradient_updates}.
Both $p_{\bftheta}(y_i\mid\by_c,\bg_c)$ and
$p_{\bftheta}(y_i\mid\by_c,\bs_c)$ are Gaussian, so equality
of their means and variances proves the result.
\end{proof}

\paragraph{Positive definiteness of the Gram matrix.}
The assumptions of
Proposition~\ref{prop:methods_symmetric_exactness} imply that
$\bD_i^\top\bD_i=a\bI_m+b\mathbf1\mathbf1^\top$ is positive
definite, and hence that $m\le d$.
Its eigenvalues are $a$, with multiplicity $m-1$, and $a+mb$,
and both are nonnegative because $\bD_i^\top\bD_i$ is a Gram
matrix.
If $a=0$, then \eqref{eq:app_equal_distance} gives
$\bx_j=\bx_{j'}$ for all $j,j'\in c$, so $\bx_j=\bar{\bx}_c$
and the expression for $\bv_{ij}$ in the proof gives
$\bv_{ij}=\bfzero$.
If $a+mb=0$, then $\|\bar{\bx}_c-\bx_i\|^2=a/m+b=0$, so
$\bx_i=\bar{\bx}_c$, and $\bu_{ij}$ and $\bv_{ij}$ are both
multiples of $\bx_j-\bar{\bx}_c$.
Either case contradicts the linear independence of
$\bu_{ij}$ and $\bv_{ij}$, so both eigenvalues are positive.
A positive definite $m\times m$ Gram matrix of vectors in
$\bbR^d$ requires $m\le d$.

\subsection{Proof of Theorem~\ref{thm:methods_approximation_error} and Corollary~\ref{cor:methods_kl_bound}}
\label{app:approximation_error}

\begin{proof}
Throughout the proof, the inputs, conditioning set,
and covariance parameters are fixed.
We remove redundant entries of $\bs_c$, so that
the transformation from $\bg_c$ to $(\bs_c,\bo_c)$
is invertible.
The joint covariance of $(y_i,\by_c,\bs_c,\bo_c)$
is therefore positive definite.

By \eqref{eq:methods_conditional_covariance},
\[
\cov(\bg_j,y_i\mid\by_c)=\bu_{ij}-\bv_{ij}.
\]
Each omitted direction at input $\bx_j$ is orthogonal
to both $\bu_{ij}$ and $\bv_{ij}$.
Hence
\[
\cov(\bo_c,y_i\mid\by_c)=\bfzero.
\]
If $\nu_i^y=0$, the Gaussian conditional variance formula gives
\[
0
=\var(y_i\mid\by_c)-V_i^R
=\bk_{\bs_c y_i\mid\by_c}^{\top}
 \bK_{\bs_c\bs_c\mid\by_c}^{-1}
 \bk_{\bs_c y_i\mid\by_c}.
\]
Since $\bK_{\bs_c\bs_c\mid\by_c}$ is positive definite,
this implies $\cov(\bs_c,y_i\mid\by_c)=\bfzero$.
Thus $y_i$ is conditionally independent of $(\bs_c,\bo_c)$
given $\by_c$, and $p_i^F=p_i^R$ for all values of
$(\by_c,\bg_c)$.
If there are no omitted directions, then $p_i^F=p_i^R$
and $\nu_i^\mu=\nu_i^\perp=\nu_i^{\max}=0$, so all claims hold.
We henceforth assume $\nu_i^y>0$ and that $\bo_c$ is nonempty.

All remaining moments are conditional on $\by_c$, which we
suppress from the notation; for example, $\var(y_i)$ stands
for $\var(y_i\mid\by_c)$.
Write $V_0=\var(y_i)$, $\bK_{ss}=\var(\bs_c)$,
$\bk_s=\cov(\bs_c,y_i)$, $\bfbeta=\bK_{ss}^{-1}\bk_s$,
and $\mu=\mu_i^R$, so that
$\mu=\bbE[y_i]+\bfbeta^\top(\bs_c-\bbE[\bs_c])$ and
$V_i^R=V_0-\bk_s^\top\bfbeta$.

\paragraph{Step 1: moments of the reduced mean.}
We have
$\var(\mu)=\bfbeta^\top\bK_{ss}\bfbeta=\bk_s^\top\bfbeta
=V_0-V_i^R=\nu_i^yV_0>0$
and $\cov(\mu,y_i)=\bfbeta^\top\bk_s=\var(\mu)$.
Hence
\[
\var(\mu\mid y_i)
=\var(\mu)-\frac{\var(\mu)^2}{V_0}
=\nu_i^y(1-\nu_i^y)V_0.
\]

\paragraph{Step 2: decomposition of $\bs_c$.}
Let $\bA$ be a matrix whose rows form a basis of the
orthogonal complement of $\bk_s$, and let $\bt=\bA\bs_c$;
if $\bs_c$ has a single entry, $\bt$ is empty and the
statements involving $\bt$ below hold trivially.
Then $\cov(\bt,\mu)=\bA\bK_{ss}\bfbeta=\bA\bk_s=\bfzero$
and $\cov(\bt,y_i)=\bA\bk_s=\bfzero$.
Because $\bfbeta^\top\bk_s=\var(\mu)>0$, the vector $\bfbeta$
is not orthogonal to $\bk_s$, so $(\mu,\bt)$ is an invertible
affine function of $\bs_c$, and conditioning on $\bs_c$ is
equivalent to conditioning on $(\mu,\bt)$.
The vector $\bs_c^\perp$ in \eqref{eq:methods_error_quantities}
is an affine function of $\bs_c$ with linear part
$\bI-\bk_s\bfbeta^\top/\var(\mu)$, whose null space is spanned
by $\bk_s$, so its row space equals that of $\bA$, and
conditioning on $\bs_c^\perp$ is equivalent to conditioning
on $\bt$.
Since $\bt$ and $\bo_c$ are uncorrelated with $y_i$,
conditioning on $y_i$ leaves $\var(\bt)$, $\var(\bo_c)$,
$\cov(\bo_c,\bt)$, and $\cov(\bo_c,\mu)$ unchanged.

\paragraph{Step 3: the variance gap.}
Let $\bfgamma=\cov(\bo_c,\mu)=\cov(\bo_c,\bs_c)\bfbeta$ and
$\bP=\var(\bo_c\mid\bt)$, which is positive definite.
Since $\cov(\bo_c,y_i)=\bfzero$, we have
$\cov(\bo_c,y_i\mid\bs_c)=-\cov(\bo_c,\bs_c)\bfbeta=-\bfgamma$,
so
\[
V_i^R-V_i^F=\bfgamma^\top\var(\bo_c\mid\bs_c)^{-1}\bfgamma.
\]
Because $\cov(\mu,\bt)=\bfzero$, we have
$\cov(\bo_c,\mu\mid\bt)=\bfgamma$ and
$\var(\mu\mid\bt)=\var(\mu)$, so that
$\var(\bo_c\mid\bs_c)=\var(\bo_c\mid\mu,\bt)
=\bP-\bfgamma\bfgamma^\top/\var(\mu)$.
Let $\zeta=\bfgamma^\top\bP^{-1}\bfgamma$.
Positive definiteness of $\var(\bo_c\mid\bs_c)$ gives
$\zeta<\var(\mu)$, and the Sherman-Morrison formula gives
\[
V_i^R-V_i^F
=\zeta+\frac{\zeta^2}{\var(\mu)-\zeta}
=\frac{\zeta\,\var(\mu)}{\var(\mu)-\zeta}.
\]

\paragraph{Step 4: the identity.}
By Step 2, $\var(\mu\mid y_i,\bt)=\var(\mu\mid y_i)$,
$\cov(\mu,\bo_c\mid y_i,\bt)=\bfgamma^\top$, and
$\var(\bo_c\mid y_i,\bt)=\bP$, so
$\var(\mu\mid y_i,\bt,\bo_c)=\var(\mu\mid y_i)-\zeta$.
Since conditioning on $\bs_c^\perp$ is equivalent to
conditioning on $\bt$, the definition of $\nu_i^\perp$ and
Step 1 give
\[
\nu_i^\perp
=\frac{\zeta}{\var(\mu\mid y_i)}
=\frac{\zeta}{\nu_i^y(1-\nu_i^y)V_0},
\]
and $\nu_i^\perp<1$ because
$\var(\mu\mid y_i,\bt,\bo_c)>0$.
Substituting $\zeta=\nu_i^\perp\nu_i^y(1-\nu_i^y)V_0$,
$\var(\mu)=\nu_i^yV_0$, and $V_i^R=(1-\nu_i^y)V_0$ into
Step 3 gives
\[
V_i^R-V_i^F
=\frac{\nu_i^\perp\nu_i^y(1-\nu_i^y)V_0}{1-\nu_i^\perp(1-\nu_i^y)},
\qquad
V_i^F
=\frac{(1-\nu_i^y)(1-\nu_i^\perp)V_0}{1-\nu_i^\perp(1-\nu_i^y)},
\]
and hence
\[
\frac{V_i^R}{V_i^F}
=\frac{1-\nu_i^\perp(1-\nu_i^y)}{1-\nu_i^\perp}
=1+\frac{\nu_i^y\nu_i^\perp}{1-\nu_i^\perp}.
\]
The identity \eqref{eq:methods_kl_identity} now follows from
\eqref{eq:methods_mean_kl_error}.

\paragraph{Step 5: exact recovery.}
Since $\var(\bo_c\mid y_i)=\var(\bo_c)$ and
$\cov(\mu,\bo_c\mid y_i)=\bfgamma^\top$,
\[
\nu_i^\mu
=\frac{\bfgamma^\top\var(\bo_c)^{-1}\bfgamma}{\var(\mu\mid y_i)},
\]
so both $\nu_i^\mu$ and $\nu_i^\perp$ vanish if and only if
$\bfgamma=\bfzero$.
If $\bfgamma=\bfzero$, then $\zeta=0$ and Step 3 gives
$V_i^F=V_i^R$.
By the law of total variance,
$\var(\mu_i^F-\mu_i^R)=\var(\mu_i^F\mid\bs_c)=V_i^R-V_i^F=0$.
Since $\mu_i^F-\mu_i^R$ is an affine function of
$(\bs_c,\bo_c)$ with zero mean and zero variance, and
$(\bs_c,\bo_c)$ has a positive definite covariance matrix,
$\mu_i^F=\mu_i^R$ for all values of $(\by_c,\bg_c)$, and
hence $p_i^F=p_i^R$.
Conversely, if $p_i^F=p_i^R$ for all values of
$(\by_c,\bg_c)$, then $V_i^F=V_i^R$, so $\zeta=0$ by Step 3
and $\bfgamma=\bfzero$.
This completes the proof of
Theorem~\ref{thm:methods_approximation_error}.

\paragraph{Step 6: proof of Corollary~\ref{cor:methods_kl_bound}.}
Conditioning additionally on $\bt$ cannot increase a
conditional variance, so
$\var(\mu\mid y_i,\bt,\bo_c)\le\var(\mu\mid y_i,\bo_c)$ and
$\nu_i^\mu\le\nu_i^\perp$.
Since $x\mapsto x/(1-x)$ is increasing on $[0,1)$,
\eqref{eq:methods_kl_identity} gives the lower bound in
\eqref{eq:methods_kl_bound}.

For the upper bound, let $\bfSigma_o=\var(\bo_c)=\var(\bo_c\mid y_i)$ and
\[
\bfxi=\frac{\bfSigma_o^{-1/2}\bfgamma}{\var(\mu\mid y_i)^{1/2}},
\qquad
\bM=\bfSigma_o^{-1/2}\cov(\bo_c,\bt)\var(\bt)^{-1}
\cov(\bt,\bo_c)\bfSigma_o^{-1/2}.
\]
Then $\bP=\bfSigma_o^{1/2}(\bI-\bM)\bfSigma_o^{1/2}$, so $\bM$
is positive semidefinite with $\bI-\bM$ positive definite,
and Steps 4 and 5 give
$\nu_i^\mu=\|\bfxi\|^2$ and
$\nu_i^\perp=\bfxi^\top(\bI-\bM)^{-1}\bfxi$.
By its definition, $\nu_i^{\max}$ is the largest generalized
eigenvalue of $\cov(\bs_c,\bo_c\mid y_i)\bfSigma_o^{-1}
\cov(\bo_c,\bs_c\mid y_i)$ relative to $\var(\bs_c\mid y_i)$,
which is also the largest eigenvalue of
\[
\bfSigma_o^{-1/2}\cov(\bo_c,\bs_c\mid y_i)
\var(\bs_c\mid y_i)^{-1}\cov(\bs_c,\bo_c\mid y_i)
\bfSigma_o^{-1/2}.
\]
This matrix does not change when $\bs_c$ is replaced by the
invertible affine function $(\mu,\bt)$.
Given $y_i$, the covariance matrix of $(\mu,\bt)$ is block
diagonal with blocks $\var(\mu\mid y_i)$ and $\var(\bt)$, and
$\cov(\bo_c,(\mu,\bt)\mid y_i)=[\bfgamma,\cov(\bo_c,\bt)]$,
so the matrix equals $\bfxi\bfxi^\top+\bM$.
Moreover, $\bI-\bfxi\bfxi^\top-\bM
=\bfSigma_o^{-1/2}\var(\bo_c\mid y_i,\bs_c)\bfSigma_o^{-1/2}$
is positive definite, so
$\nu_i^{\max}=\lambda_{\max}(\bfxi\bfxi^\top+\bM)<1$.

If $\nu_i^\perp=0$, then $\nu_i^\mu=0$ by Step 5, and both
bounds hold with equality.
Otherwise, let $\bfphi=(\bI-\bM)^{-1}\bfxi$, so that
$\bM\bfphi=\bfphi-\bfxi$ and $\bfxi^\top\bfphi=\nu_i^\perp$.
The Rayleigh quotient at $\bfphi$ gives
\[
\nu_i^{\max}
\ge\frac{(\bfxi^\top\bfphi)^2+\bfphi^\top\bM\bfphi}{\|\bfphi\|^2}
=\frac{(\nu_i^\perp)^2+\|\bfphi\|^2-\nu_i^\perp}{\|\bfphi\|^2}
=1-\frac{\nu_i^\perp(1-\nu_i^\perp)}{\|\bfphi\|^2}.
\]
By the Cauchy-Schwarz inequality,
$\nu_i^\perp=\bfxi^\top\bfphi\le\|\bfxi\|\|\bfphi\|$, so
$\|\bfphi\|^2\ge(\nu_i^\perp)^2/\nu_i^\mu$ and
\[
1-\nu_i^{\max}
\le\frac{\nu_i^\perp(1-\nu_i^\perp)}{\|\bfphi\|^2}
\le\frac{\nu_i^\mu(1-\nu_i^\perp)}{\nu_i^\perp},
\]
which is equivalent to
$\nu_i^\perp/(1-\nu_i^\perp)\le\nu_i^\mu/(1-\nu_i^{\max})$.
Together with \eqref{eq:methods_kl_identity}, this gives the
upper bound in \eqref{eq:methods_kl_bound}.
\end{proof}

\section{Construction of Covariance Blocks}
\label{app:reduced_covariances}

The gradient covariance block omitted from
Section~\ref{sec:prelim_dgp} follows by differentiating the
kernel twice:
\begin{equation}
\begin{aligned}
\cov(\bg_j,\bg_{j'})
&=\nabla_{\bx_j}\nabla_{\bx_{j'}}^{\top}
k_{\bftheta}(\bx_j,\bx_{j'})
+\mathbbm{1}\{j=j'\}\sigma_g^2\bfLambda_{\bftheta}\\
&=-2\kappa_{\bftheta}'(r_{jj'})\bfLambda_{\bftheta}-4\kappa_{\bftheta}''(r_{jj'})\bfLambda_{\bftheta}\boldsymbol{\delta}_{jj'}
\boldsymbol{\delta}_{jj'}^\top\bfLambda_{\bftheta}
+\mathbbm{1}\{j=j'\}\sigma_g^2\bfLambda_{\bftheta}.
\end{aligned}
\label{eq:app_gg_covariance}
\end{equation}
In the scaled coordinates of Section~\ref{sec:methods_directions},
$\bfLambda_{\bftheta}=\bI_d$.

Fix a target input $\bx_i$ and a conditioning set
$c=\{j_1,\ldots,j_m\}$.
As in Section~\ref{sec:methods_directions}, we work
in scaled coordinates and omit the tildes.
Both retained directions lie in the column space of
$\bD_i$.
We use this representation to construct the reduced
covariance blocks directly from the Gram matrix
$\bH_i=\bD_i^\top\bD_i$, as in TERA.

Recall that
\[
\bD_i
=
[\bx_{j_1}-\bx_i,\ldots,\bx_{j_m}-\bx_i].
\]
Let $\be_a$ denote the $a$th coordinate vector
in $\bbR^m$.
Then
\[
\bx_i-\bx_{j_a}=-\bD_i\be_a,
\qquad
\bx_{j_b}-\bx_{j_a}=\bD_i(\be_b-\be_a).
\]
The squared distances in scaled coordinates are
\[
r_{j_a i}=[\bH_i]_{aa},
\qquad
r_{j_a j_b}
=
[\bH_i]_{aa}+[\bH_i]_{bb}-2[\bH_i]_{ab}.
\]
Thus the function covariance and target
cross-covariance are
\[
[\bK_{\by_c\by_c}]_{ab}
=
\kappa_{\bftheta}(r_{j_a j_b})
+\sigma_y^2\mathbbm{1}\{a=b\},
\qquad
[\bk_{\by_c y_i}]_a
=
\kappa_{\bftheta}(r_{j_a i}),
\]
and $\bw_i$ is obtained by solving
$\bK_{\by_c\by_c}\bw_i=\bk_{\by_c y_i}$.

For each $a=1,\ldots,m$, define
\begin{equation}
\bC_{i j_a}
=
\left[
-\eta_{j_a i}\be_a,\quad
\sum_{b=1}^m
[\bw_i]_b\,\eta_{j_a j_b}(\be_b-\be_a)
\right]
\in\bbR^{m\times2}.
\label{eq:app_direction_coefficients}
\end{equation}
By \eqref{eq:methods_conditional_covariance},
\begin{equation}
[\bu_{i j_a},\bv_{i j_a}]
=
\bD_i\bC_{i j_a},
\qquad
\bs_{i j_a}
=
\bC_{i j_a}^{\top}\bD_i^\top\bg_{j_a}.
\label{eq:app_coefficient_representation}
\end{equation}
The reduced observations can therefore be computed
from $\bD_i^\top[\bg_{j_1},\ldots,\bg_{j_m}]$
without explicitly forming the direction vectors.

To express the covariance blocks, write
\[
\bh_{ab}=\bH_i(\be_b-\be_a).
\]
Applying \eqref{eq:prelim_derivative_covariances} and
\eqref{eq:app_gg_covariance}
to \eqref{eq:app_coefficient_representation}, with
gradient noise covariance $\sigma_g^2\bI_d$ in
scaled coordinates, gives
\begin{equation}
\begin{aligned}
\cov(\bs_{i j_a},y_{j_b})
&=
\eta_{j_a j_b}\bC_{i j_a}^{\top}\bh_{ab} \in\bbR^2,\\
\cov(\bs_{i j_a},y_i)
&=
-\eta_{j_a i}\bC_{i j_a}^{\top}\bH_i\be_a \in\bbR^2,\\
\cov(\bs_{i j_a},\bs_{i j_b})
&=
\left[
\eta_{j_a j_b}
+\sigma_g^2\mathbbm{1}\{a=b\}
\right]
\bC_{i j_a}^{\top}\bH_i\bC_{i j_b}-
4\kappa_{\bftheta}''(r_{j_a j_b})
\bigl(\bC_{i j_a}^{\top}\bh_{ab}\bigr)
\bigl(\bC_{i j_b}^{\top}\bh_{ab}\bigr)^\top \in\bbR^{2\times2}.
\end{aligned}
\label{eq:app_reduced_gradient_blocks}
\end{equation}
Stacking these blocks gives
$\bK_{\bs_c\by_c}\in\bbR^{2m\times m}$,
$\bk_{\bs_c y_i}\in\bbR^{2m}$, and
$\bK_{\bs_c\bs_c}\in\bbR^{2m\times2m}$.

\paragraph{Batched training and prediction.}
For a batch $\mathcal B$ of $B$ prediction targets, we stack
the reduced covariance matrices of
Section~\ref{sec:methods_learning_prediction} as
\[
\mathcal K
=
\operatorname{stack}_{\star\in\mathcal B}(\bK_\star)
\in\bbR^{B\times3m\times3m}.
\]
Batched Cholesky factorizations and triangular solves
compute
\begin{equation}
\begin{aligned}
\bL_\star\bL_\star^\top&=\bK_\star,\\
\bL_\star
\begin{bmatrix}
\bfalpha_\star&\bfbeta_\star
\end{bmatrix}
&=
\begin{bmatrix}
\bz_\star^R-\bbE[\bz_\star^R]&\bk_\star
\end{bmatrix},
\qquad \star\in\mathcal B.
\end{aligned}
\label{eq:app_batched_solves}
\end{equation}
By \eqref{eq:prelim_posterior_updates}, the predictive means
and variances then follow from
\begin{equation}
\begin{aligned}
\mu_\star^R
&=\bbE[f_\star]+\bfalpha_\star^\top\bfbeta_\star,\\
V_\star^R
&=\var(f_\star)-\bfbeta_\star^\top\bfbeta_\star.
\end{aligned}
\label{eq:app_batched_prediction}
\end{equation}

\section{Experimental Settings}
\label{app:experiment_settings}

\subsection{Scaling Conditioning Sets and Batch Sizes}
\label{app:scaling_settings}

We evaluate conditioning set scaling under an ARD
Mat\'ern-$5/2$ GP and $m$ scaling with $B=32$ and $B=1{,}024$ on
the MD22 Buckyball-catcher dataset. The GP experiment examines how predictive accuracy changes
with the conditioning set size when covariance parameters
are fixed. The MD22 experiment separately measures how
conditioning set and batch sizes affect the computational
cost of training. 
Table~\ref{tab:scaling_settings} summarizes the settings.
All methods use identical inputs, conditioning sets,
and batch sizes.
For the MD22 experiment, we use the initial covariance parameters in
Table~\ref{tab:md22_settings}.

\begin{table}[h]
\centering
\small
\setlength{\tabcolsep}{5pt}
\renewcommand{\arraystretch}{1.12}
\caption{Settings for conditioning set and batch size scaling.}
\label{tab:scaling_settings}
\begin{tabular}{ll}
\toprule
Setting & Value \\
\midrule
\multicolumn{2}{l}{GP prediction} \\
\midrule
Training/test inputs & $100,000$ / $256$ \\
Input dimension & $d=500$ \\
Input design & Sobol in $[0,1]^d$ \\
Kernel & ARD Mat\'ern-$5/2$ \\
Signal variance & $1$ \\
Lengthscales & Logarithmically spaced; $5.82$--$23.27$ \\
Function-noise variance & $4$ \\
Gradient-noise variance & $0.36$ per coordinate \\
Conditioning set sizes & $20,40,80,160,320$ \\
Prediction batch size & $32$ \\
Repeats & $5$ \\
Covariance parameters & Fixed at their true values \\
\midrule
\multicolumn{2}{l}{MD22 training step} \\
\midrule
Dataset & Buckyball-catcher \\
Train/test split & 90/10 \\
Repeats & $5$ \\
Kernel & Isotropic SE \\
Conditioning set sizes & $10,20,30,40,50,60,80,100,150,200$ \\
Batch sizes & $32$, $1,024$ \\
Measured computation & One forward and backward pass \\
Covariance parameters & Fixed at their initial values \\
\bottomrule
\end{tabular}
\end{table}

Dense joint sampling is infeasible at this scale.
Instead, we evaluate prediction error analytically under
the assumed GP.
For each target $f_\star$, let $\bz_\star$ collect the
observations retained by the method, with
\[
\bK_\star=\cov(\bz_\star),
\qquad
\bk_\star=\cov(\bz_\star,f_\star),
\qquad
k_{\star\star}=\var(f_\star).
\]
The observation covariance $\bK_\star$ includes both
function and gradient noise.
Under the zero-mean GP, the predictor is
$\hat f_\star=\bfalpha_\star^\top\bz_\star$, where
$\bfalpha_\star=\bK_\star^{-1}\bk_\star$.
With inputs and covariance parameters fixed,
\begin{align*}
\bbE\!\left[(f_\star-\hat f_\star)^2\right]
&=
\bbE[f_\star^2]
-2\bfalpha_\star^\top\bbE[\bz_\star f_\star]
+\bfalpha_\star^\top
\bbE[\bz_\star\bz_\star^\top]\bfalpha_\star\\
&=
k_{\star\star}
-2\bfalpha_\star^\top\bk_\star
+\bfalpha_\star^\top\bK_\star\bfalpha_\star.
\end{align*}
Thus, we have
\[
\mathrm{RMSE}
=
\left[
\frac{1}{n_\star}\sum_{\star=1}^{n_\star}
\left(
k_{\star\star}
-2\bfalpha_\star^\top\bk_\star
+\bfalpha_\star^\top\bK_\star\bfalpha_\star
\right)
\right]^{1/2}.
\]
GP runtime covers prediction over all $256$ targets,
including covariance assembly, Cholesky factorization,
and triangular solves.
MD22 runtime covers one forward and backward pass,
excluding preprocessing, conditioning-set construction,
and optimizer updates.

\subsection{MD22 Regression}

We use six MD22 benchmarks, DHA ($n=69,753$, $d=168$),
AT-AT ($n=20,001$, $d=180$), Stachyose ($n=27,272$, $d=261$),
AT-AT-CG-CG ($n=10,153$, $d=354$), Buckyball-catcher ($n=6102$, $d=444$), and Double-walled nanotube ($n=5032$, $d=1110$). Table~\ref{tab:md22_settings} summarizes the experimental settings adapted from~\cite{huang2026scaling,Seung2026ScalableDG}. For \lite, we use the same settings as TERA, including conditioning set size, learning rate, parameter update step, and batch size. We use $m=30$ and batches of $32$ targets to compare the
methods in a regime where TERA's exact reduction remains
computationally affordable.
This comparison evaluates the accuracy and cost of \lite's
two-direction approximation at a common conditioning set
size. Its ability to accommodate larger conditioning sets
and batches is evaluated separately in the scaling experiments.

\begin{table}[h]
\centering
\small
\setlength{\tabcolsep}{5pt}
\renewcommand{\arraystretch}{1.12}
\caption{Settings for MD22 regression.}
\label{tab:md22_settings}
\begin{tabular}{ll}
\toprule
Setting & Value \\
\midrule
Train/test split & 90/10 \\
Seeds & 1,27,42,86,99 \\
Kernel & isotropic SE \\
Initial lengthscale & 1.0 \\
Initial signal variance & 1.0 \\
Initial value-noise variance & $10^{-3}$ \\
Initial gradient-noise variance
& $10^{-3}d$ for DSoftKI, $10^{-3}$ otherwise \\
Conditioning set size & $m=30$ \\
Optimizer & Adam; learning rates below \\
\midrule
Standard GP & 50 updates; lr $0.01$, or $0.005$ for nanotube \\
TERA & 1 epoch; lr $0.01$, batch size $32$\\
\lite & 1 epoch; lr $0.01$, batch size $32$ \\
DDSVGP & 50 epochs, 512 inducing inputs; 2 directions \\
DSoftKI & 50 epochs, 512 interpolation points, float32 \\
\midrule
\multicolumn{2}{l}{DDSVGP and DSoftKI training schedule} \\
\midrule
$d<180$ & batch 1024; lr 0.012 / 0.008 \\
$180\le d<300$ & batch 512; lr 0.006 / 0.004 \\
$300\le d<1000$ & batch 256; lr 0.003 / 0.002 \\
$d\ge1000$ & batch 128; lr 0.0015 / 0.001 \\
\bottomrule
\end{tabular}
\end{table}

\clearpage
\subsection{Bayesian Optimization}

We evaluate Ackley, Levy, and LassoDNA using the settings
in Table~\ref{tab:bo_settings}, adapted from
\cite{Seung2026ScalableDG}.
The synthetic objectives test optimization performance
in hundreds of dimensions, while LassoDNA evaluates
gradient-informed BO on a real-world hyperparameter
optimization task.
The function-only methods use objective values, whereas
TERA and \lite additionally use gradients at the evaluated
inputs.
All methods share the initial Sobol design within each run,
and objective values are standardized before surrogate fitting.
TuRBO-1 uses LogEI~\citep{Ament2023UnexpectedIT} on
the synthetic objectives and Thompson
sampling~\citep{Thompson1933ONTL} on LassoDNA.
TERA and \lite use the same conditioning set size and
acquisition optimization settings.
Wall-clock time includes surrogate fitting, acquisition
optimization, and objective evaluation, including
hypergradient computation for TERA and \lite.

\begin{table*}[h]
\centering
\small
\setlength{\tabcolsep}{6pt}
\renewcommand{\arraystretch}{1.15}
\caption{Settings for BO synthetic objectives and LassoDNA benchmark.}
\label{tab:bo_settings}

\begin{tabularx}{\textwidth}{
@{}l>{\raggedright\arraybackslash}X@{}}
\toprule
Benchmarks
& Ackley ($d=500$), Levy ($d=800$),
  LassoDNA ($d=180$) \\
Domain & $[0,1]^d$ \\
Iterations / initial design
& $100$ iterations / $30$ Sobol points \\
Seeds & $1,\ 27,\ 42,\ 86,\ 99$ \\
Noise parameters
& $10^{-3}$, fixed on synthetic tasks;
  initialized at $10^{-3}$ and learned on LassoDNA \\
\bottomrule
\end{tabularx}

\medskip

\begin{tabularx}{\textwidth}{
@{}>{\raggedright\arraybackslash}p{0.26\textwidth}
*{4}{>{\centering\arraybackslash}X}@{}}
\toprule
& VBO & TuRBO-1 & TERA & \lite \\
\midrule
Kernel
& SE & Mat\'ern-$5/2$ & Mat\'ern-$5/2$ & Mat\'ern-$5/2$ \\
Conditioning-set size
& -- & -- & $30$ & $30$ \\
Parameter estimation
& MAP & Adam & Adam & Adam \\
Lengthscale prior
& Dimension-scaled & -- & -- & -- \\
\addlinespace
Initial Adam updates
& -- & $50$ & $50$ & $50$ \\
Adam updates per refit
& -- & $50$ & $10$ & $10$ \\
Refit interval
& Every iteration & Every iteration
& $20$ iterations & $20$ iterations \\
\midrule
Acquisition on synthetic tasks
& LogEI & LogEI & LogEI & LogEI \\
Acquisition on LassoDNA
& LogEI & TS & LogEI & LogEI \\
Raw candidates
& $512$ & $512$ & $256$ & $256$ \\
Restarts
& $4$ & $4$ & $4$ & $4$ \\
Maximum L-BFGS iterations
& $300$ & $300$ & $30$ & $30$ \\
\bottomrule
\end{tabularx}
\end{table*}

LassoDNA is a hyperparameter optimization task in which
BO selects $180$ regularization weights for a weighted
Lasso model, searching on a logarithmic scale.
The objective is the mean validation MSE across
cross-validation folds.
TERA and \lite compute derivatives of the validation
loss with respect to these weights by implicitly
differentiating the fitted Lasso
optimality condition~\citep{Bertrand2020ImplicitDO}.
We apply the chain rule to express these derivatives
in the BO coordinates and average them across folds.

\clearpage
\section{Additional Results}
\label{app:additional_results}

\subsection{Examining the Accuracy Gap on MD22}
\label{app:md22_shared_parameters}

We examine whether the accuracy gap between \lite and
TERA on MD22 is removed by sharing covariance parameters,
increasing the conditioning set size, or further training.

\paragraph{Prediction with shared covariance parameters.}
For each dataset and seed, we train TERA at $m=20$
for one epoch and freeze its learned covariance parameters.
Using these parameters for all methods, we evaluate
\lite and a function-only Vecchia GP at
$m\in\{20,40,60,80,160,320\}$, and TERA at
$m\in\{20,40,60\}$.
The function-only Vecchia GP uses the same conditioning
inputs but retains only the energy (function value) observations. This baseline is distinct from the Standard GP used
in the main MD22 comparison.

Figure~\ref{fig:appendix_md22_fixed_parameters} shows that
\lite's RMSE remains higher than TERA's at matched
conditioning set sizes, even when both methods use the
same covariance parameters.
Increasing $m$ provides little improvement for \lite
or the function-only Vecchia GP.
Under these TERA-estimated parameters, \lite has lower
RMSE than the function-only Vecchia GP on AT-AT and
Buckyball-catcher, but higher RMSE on the other four datasets.
Thus, using common covariance parameters and larger
conditioning sets does not remove the accuracy gap in
these comparisons.

\begin{figure}[H]
\centering
\includegraphics[width=0.99\linewidth]{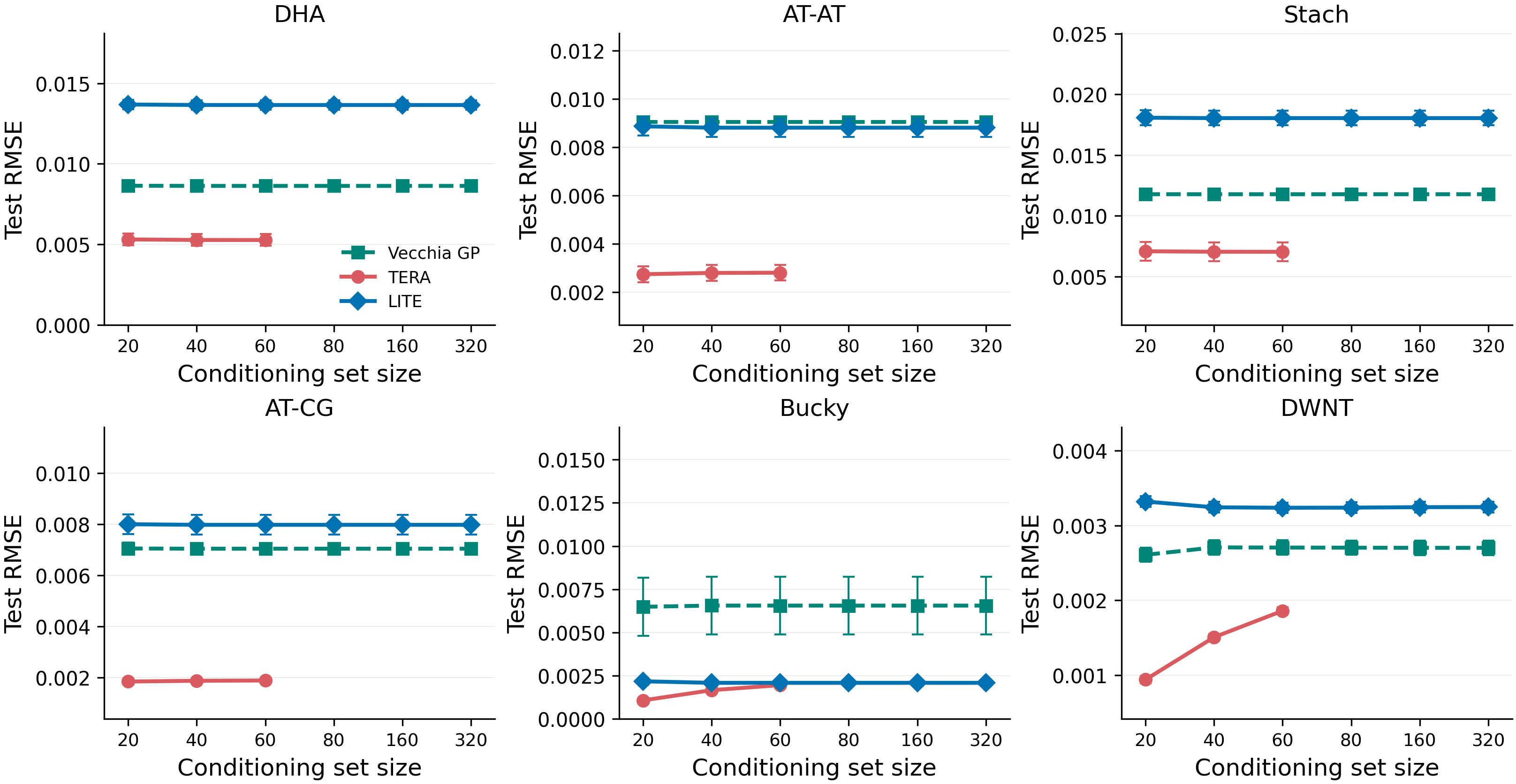}
\caption{Test RMSE versus conditioning set size on six
MD22 datasets, with all methods using the covariance
parameters learned by TERA at $m=20$ after one epoch.
Parameters remain fixed throughout prediction.
Increasing $m$ provides little improvement for \lite
and the function-only Vecchia GP, and \lite's RMSE remains
higher than TERA's at matched $m$.}
\label{fig:appendix_md22_fixed_parameters}
\end{figure}

\clearpage
\paragraph{Further training from shared parameters.}
We next initialize \lite with the same TERA estimates
and independently train it for one additional epoch
at each conditioning set size.
TERA's parameters remain fixed.
Figure~\ref{fig:appendix_md22_warmstart} compares these
results with prediction using the frozen parameters.

Further training substantially reduces \lite's RMSE
on DHA, AT-AT, Stachyose, and Double-walled nanotube,
but changes little on AT-AT-CG-CG and increases RMSE
on Buckyball-catcher.
The effect of further parameter estimation is therefore
dataset dependent, and increasing $m$ again provides
little additional improvement.

\begin{figure}[H]
\centering
\includegraphics[width=0.99\linewidth]{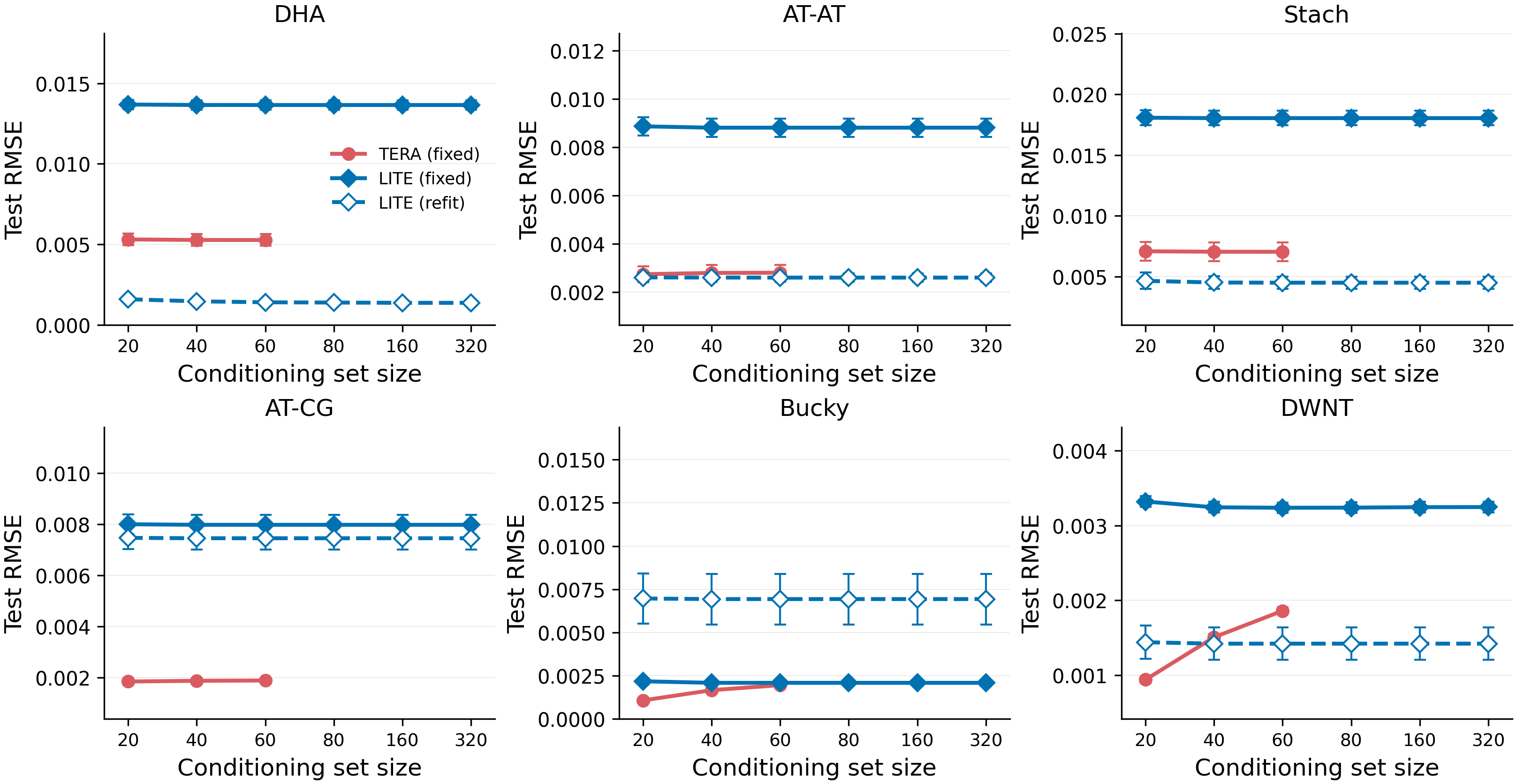}
\caption{Test RMSE versus conditioning set size on six
MD22 datasets, using the same TERA estimates as in
Figure~\ref{fig:appendix_md22_fixed_parameters}.
Solid curves keep these parameters fixed; the dashed
curve trains \lite for one additional epoch at each $m$,
starting independently from the same estimates.
Further training improves \lite on four datasets, changes
little on AT-AT-CG-CG, and worsens prediction on
Buckyball-catcher.}
\label{fig:appendix_md22_warmstart}
\end{figure}

\clearpage
\paragraph{Learning curves from a common initialization.}
\label{app:md22_learning_curves}

In a separate experiment, we train TERA and \lite from
the same initial covariance parameters for five epochs
at $m=20$, using training and prediction batches of
$32$ targets.
This experiment examines how additional training affects
test error without initializing either method from
the other's parameter estimates.

Figure~\ref{fig:appendix_md22_training_curve} shows that
test RMSE does not decrease monotonically with training.
On several datasets, it increases after an early minimum,
and later improvements vary across datasets.
These learning curves complement the shared-parameter
comparisons by showing that additional training does not
uniformly improve predictive accuracy.

\begin{figure}[H]
\centering
\includegraphics[width=0.99\linewidth]{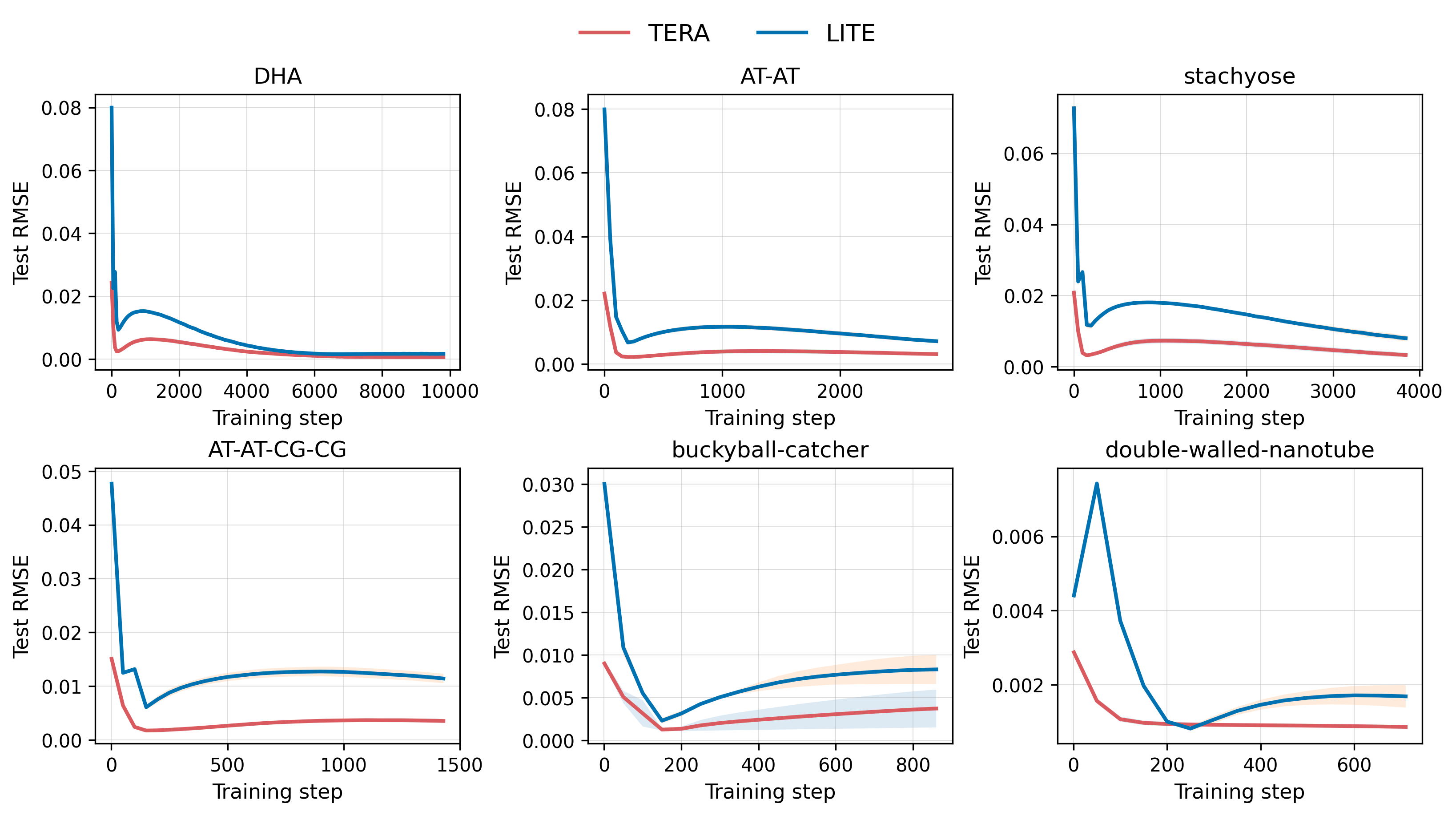}
\caption{Test RMSE versus training step on six MD22
datasets. Both methods use $m=20$, training and prediction
batch sizes of $32$, and five training epochs.
Additional updates do not uniformly improve predictive
accuracy, with RMSE increasing after early minima on some
datasets. Curves average five independent runs.}
\label{fig:appendix_md22_training_curve}
\end{figure}

\paragraph{Summary of the diagnostics.}
These diagnostics show that common covariance parameters
do not by themselves remove the accuracy gap, while the
effect of further training varies across datasets.
On MD22, all three Vecchia methods show little
improvement from larger conditioning sets over the
tested ranges.
The main MD22 comparison demonstrates an accuracy-cost
tradeoff, with \lite achieving substantial runtime and
memory savings at the cost of higher RMSE than TERA.
The GP simulation in Section~\ref{sec:experiments_scaling}
demonstrates the predictive benefit of larger conditioning
sets.
In that setting, \lite uses sets beyond TERA's memory
limit to achieve lower error at substantially lower
computational and memory cost.

\clearpage
\subsection{Computational Cost of Bayesian Optimization}
\label{app:bo_computational_cost}

Figure~\ref{fig:appendix_bo_resources} complements the
optimization curves in Figure~\ref{fig:experiment_bo}
by comparing wall-clock time and peak GPU memory
for the same BO runs. Across all three tasks, \lite uses only approximately
$13$--$23\%$ of TERA's peak GPU memory.
Its resource advantages also extend to function-only baselines.
On Levy, \lite runs $2.7\times$ faster than VBO and uses only
$35\%$ of TuRBO-1's peak GPU memory.
On LassoDNA, it uses only $12\%$ of TuRBO-1's memory and
requires less wall-clock time than VBO, including the cost
of hypergradient computation.
These savings accompany the effective optimization shown in the main text.

\begin{figure}[H]
\centering
\includegraphics[width=0.99\linewidth]{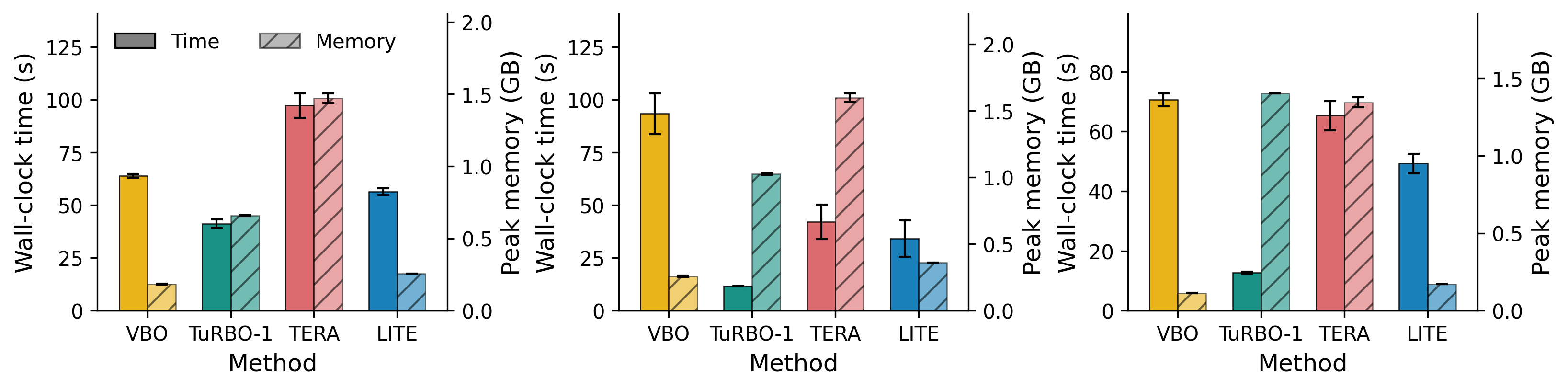}
\caption{Wall-clock time and peak GPU memory on Ackley
($500D$, left), Levy ($800D$, center), and LassoDNA
($180D$, right). Time includes objective evaluations and gradient computation where used, including hypergradient computation on LassoDNA. Bars average five independent runs, with error bars indicating
standard errors.}
\label{fig:appendix_bo_resources}
\end{figure}

\end{document}